\newtheorem{theorem}{Theorem}[section]
\newtheorem{lemma}[theorem]{Lemma}
\newtheorem{definition}[theorem]{Definition}
\newtheorem{corollary}[theorem]{Corollary}
\newcommand{\E}{ \mathbb E}
\newcommand{ \R}{ \mathbb R}
\renewcommand{\Pr}{ \mathrm P}
\newcommand{ \D}{ \mathbb D}
\newcommand{ \s}{ \mathcal S }
\newcommand{ \cb}{ \mathcal B}
\newcommand{ \cd}{ \mathcal D}
\newcommand{ \cV}{ \mathcal V}
\newcommand{\ga}{\alpha}
\newcommand{\gb}{\beta}
\newcommand{\gd}{\delta}
\newcommand{\gre}{\epsilon}
\newcommand{\gve}{\varepsilon}
\newcommand{\gt}{\theta}
\newcommand{\gl}{\lambda}
\newcommand{\gs}{\sigma}
\newcommand{\om}{\omega}
\newcommand{\gm}{\gamma}
\newcommand{\G}{\Gamma}
\newcommand{\gkp}{\kappa}
\newcommand{\bh}{\mathbf{H}}
\newcommand{\bj}{\mathbf{J}}
\newcommand{\bw}{\mathbf{W}}
\newcommand{\bld}{\mathbf{D}}
\newcommand{\br}{\mathbf{R}}
\newcommand{\bc}{\mathbf{C}}
\newcommand{\rar}{ \rightarrow}
\newcommand{\V}[1]{\ensuremath{\boldsymbol{#1}}\xspace}
\newcommand{\til}[1]{\ensuremath{\tilde{#1}}}
\title{\sc{Community Detection in Networks using Graph Distance}}
\author{Sharmodeep Bhattacharyya and Peter J. Bickel}
\date{\today}
\begin{document}
\maketitle

%
%
%
%
%
%
%
%
%

\begin{abstract}
The study of networks has received increased  attention recently not only from the social sciences and statistics but also from physicists, computer scientists and mathematicians. One of the principal problem in networks is community detection. Many algorithms have been proposed for community finding \cite{mcsherry2001spectral} \cite{rohe2011spectral} but most of them do not have have theoretical guarantee for sparse networks and networks close to the phase transition boundary proposed by physicists \cite{decelle2011asymptotic}. There are some exceptions but all have incomplete theoretical basis \cite{coja2009finding} \cite{chen2012fitting} \cite{krzakala2013spectral}. Here we propose an algorithm based on the graph distance of vertices in the network. We give theoretical guarantees that our method works in identifying communities for block models, degree-corrected block models \cite{karrer2011stochastic} and block models with the number of communities growing with number of vertices. Despite favorable simulation results, we are not yet able to conclude that our method is satisfactory for worst possible case. We illustrate on a network of political blogs, Facebook networks and some other networks. 
\end{abstract}

\section{Introduction}
\label{intro}
The study of networks has received increased  attention recently  not only from the social sciences and statistics but also from physicists, computer scientists and mathematicians. With the information boom, a huge number of network data sets have come into prominence. In biology - gene transcription networks, protein-protein interaction network, in social media - Facebook, Twitter, Linkedin networks, information networks arising in connection with text mining, technological networks such as the Internet, ecological and epidemiological networks and many others have appeared. Although the study of networks has a long history in physics, social sciences and mathematics literature and informal methods of analysis have arisen in many fields of application, statistical inference on network models as opposed to descriptive statistics, empirical modeling and some Bayesian approaches \cite{newman2009networks} \cite{kolaczyk2009statistical} \cite{hoff2002latent} has not been addressed extensively in the literature. A mathematical and systematic study of statistical inference on network models has only started in recent years.

One of the fundamental questions in analysis of such data is detecting and modeling community structure within the network. A lot of algorithmic approaches to community detection have been proposed, particularly in the physics and computer science literature \cite{newman2006modularity} \cite{leskovec2010empirical} \cite{fortunato2010community}. In terms of community detection, there are two different goals that researchers have tried to pursue - 
\begin{itemize}
\item \textbf{Algorithmic Goal:} Identify the community each vertex of the network belongs to. \\
\item \textbf{Theoretical Goal:} If the network is generated by an underlying generative model, then, what is the probability of success for the algorithm.
\end{itemize}

\subsection{Algorithms}
\label{sec_alg_goal}
Several popular algorithms for community detection have been proposed in physics, computer science and statistics literature. Most of these algorithms show decent performance in community detection for selected real-world and simulated networks \cite{lancichinetti2009community} and have polynomial time complexity. We shall briefly mention some of these algorithms.
\begin{enumerate}
\item Modularity maximizing methods \cite{newman2004finding}. One of the most popular method of community detection. The problem is NP hard but spectral relaxations of polynomial complexity exist \cite{newman2006finding}.
\item Hierarchical clustering techniques \cite{clauset2004finding}. 
\item Spectral clustering based methods \cite{mcsherry2001spectral} \cite{coja2009finding}, \cite{rohe2011spectral} \cite{chaudhuri2012spectral}. These methods are also very popular. Most of the time these methods have linear or polynomial running times. Mostly shown to work for dense graphs only.
\item Profile likelihood maximization \cite{bickel2009nonparametric}. The problem is NP hard, but heuristic algorithms have been proposed, which have good performance for dense graphs.
\item Stochastic Model based methods:
\begin{itemize}
\item MCMC based likelihood maximization by Gibbs Sampling, the cavity method and belief propagation based on stochastic block model. \cite{decelle2011asymptotic}
\item Variational Likelihood Maximization based on stochastic block model \cite{celisse2011consistency}, \cite{bickel2012asymptotic}. Polynomial running time but appears to work only for dense graphs.
\item Pseudo-likelihood Maximization \cite{chen2012fitting}. Fast method which works well for both dense and sparse graphs. But the method is not fully justified.
\item Model-based:
\begin{itemize}
\item[(a)] Mixed Membership Block Model \cite{airoldi2008mixed}. Iterative method and works for dense graphs. The algorithm for this model is based on variational approximation of the maximum likelihood estimation.
\item[(b)] Degree-corrected block model \cite{karrer2011stochastic}: Incorporates degree inhomogeneity in the model. Algorithms based on maximum likelihood and profile likelihood estimation has been developed.
\item[(c)] Overlapping stochastic block model \cite{latouche2011overlapping}: Stochastic block model where each vertex can lie within more than one community. The algorithm for this model is based on variational approximation of the maximum likelihood estimation.
\item[(d)] Mixed configurations model \cite{ball2011efficient}: Another extension to degree-corrected stochastic block model, where, the model is a mixture of configurations model (degree-corrected block model with one block) and each vertex can lie in more than one community. The algorithm for this model is based on the EM algorithm and maximum likelihood estimation.
\end{itemize}
\end{itemize}
\item Model based clustering \cite{handcock2007model}.
\end{enumerate}

\subsection{Theoretical Goal}
\label{sec_th_goal}
The stochastic block model (SBM) is perhaps the most commonly used and best studied model for community detection. An SBM with $Q$ blocks states that each node belongs to a community $\V{c} = (c_1, \ldots ,c_n) \in \{1, \ldots,Q\}$ which are drawn independently from the multinomial distribution with parameter $\V{\pi} = (\pi_1, \ldots, \pi_Q)$, where $\pi_i > 0$ for all $i$, and $Q$ is the number of communities, assumed known. Conditional on the labels, the edge variables $A_{ij}$ for $i < j$ are independent Bernoulli variables with
\begin{align}
\label{eq_sbm_adj}
\E[A_{ij}|\V{c}] = P_{c_ic_j}  ,
\end{align}
where $P = [P_{ab}]$ and $K = [K_{ab}]$ are $Q \times Q$ symmetric matrix. $P$ can be considered the \textit{connection probability} matrix, where as $K$ is the \textit{kernel} matrix for the connection. So, we have $P_{ab} \leq 1$ for all $a, b = 1, \ldots, Q$, $P\mathbf{1} \leq \mathbf{1}$ and $\mathbf{1}^TP\leq \mathbf{1}$ element-wise. The network is undirected, so $A_{ji} = A_{ij}$, and $A_{ii} = 0$ (no self-loops). The problem of community detection is then to infer the node labels $\V{c}$ from $A$. Thus we are not really interested in estimation or inference on parameters $\V{\pi}$ and $P$, but, rather we are interested in estimating $\V{c}$. But, it does not mean the two problems are mutually exclusive. In reality, the inferential problem and the community detection problem are quite interlinked. 

The theoretical results of community detection for stochastic block models can be divided into 3 different regimes - 
\begin{itemize}
\item[(a)] $\frac{\E(\mbox{degree})}{\log n} \rar \infty$, equivalent to, $\mathbb{P}[\mbox{there exists an isolated point}] \rar 0$. 
\item[(b)] $\E(\mbox{degree}) \rar \infty$, which means existence of giant component, but also presence of isolated small components from Theorem \ref{thm_giant_comp}. 
\item[(c)] If $\E(\mbox{degree}) = O(1)$, phase boundaries exist, below which community identification is not possible. 
\end{itemize}
Note:
\begin{itemize}
\item[(a)] All of the above mentioned algorithms perform satisfactorily on regime (a).
\item[(b)] None of the above algorithms have been shown to have near perfect probability of success under either regime (b) or (c), for the full parameter space. Some algorithms like \cite{coja2009finding} \cite{bickel2009nonparametric} \cite{chaudhuri2012spectral} \cite{chen2012fitting} are shown to partially work in the sparse setting. Some very recent algorithms include \cite{krzakala2013spectral} \cite{newman2013spectral}.
\end{itemize}

In this paper, we shall only concentrate on stochastic block models. In the future, we shall try to extend our method and results for more general models.

\subsection{Contributions and Outline of the Chapter}
In real life networks, most of the time we seem to see moderately sparse networks \cite{leskovec2007graph} \cite{leskovec2008statistical} \cite{leskovec2009community}. Most of the large or small complex networks we see seem to fall in the (b) regime of Section \ref{sec_th_goal} we describe before, that is, $\E(\mbox{degree}) \rar \infty$. 
We propose a simple algorithm, which performs well in practice in both regimes (b) and (c) and has some theoretical backing 
If degree distribution can identify block parameters then classification using our method should give reasonable result in practice.

Our algorithm is based on graph distance between vertices of the graph. We perform spectral clustering based on the  graph distance matrix of the graph. By looking at the graph distance matrix instead of adjacency matrix for spectral clustering  increases the performance of the community detection, as the normalized distance between cluster centers increases when we go from the adjacency matrix to the graph distance matrix. This helps in community detection even for sparse matrices. We only show theoretical results for stochastic block models. The theoretical proofs are quite intricate and involve careful coupling of the stochastic block model with multi-type branching process to find asymptotic distribution of the typical graph distances. Then, a careful analysis of the eigenvector of the asymptotic graph distance matrix reveals the existence of separation needed for spectral clustering to succeed. This method of analysis has been used for spectral clustering analysis using the adjacency matrix also \cite{sussman2012consistent}, but the analysis is simpler.
 
The rest of the paper is organized as follows. We give a summary of the preliminary results needed in Section \ref{sec_prel}. We present the algorithms in Section \ref{sec_algo}. We give an outline of proof of theoretical guarantee of performance of the method and then the details in Section \ref{sec_geo_theory}. The numerical performance of the methods is demonstrated on a range of simulated networks and on some real world networks in Section \ref{sec_geo_appl}. Section \ref{sec_geo_con} concludes with discussion, and the Appendix contains some additional technical results.

\section{Preliminaries}
\label{sec_prel}
Let us suppose that we have a random graph $G_n$ as the data. Let $V(G_n) = \{v_i, \ldots, v_n\}$ denote the vertices of $G_n$ and $E(G_n) = \{e_1, \ldots, e_m\}$ denote the edges of $G_n$. So, the number of vertices in $G_n$ is $|V(G_n)| = n$ and number of edges of $G_n$ is $|E(G_n)| = m$. Let the adjacency matrix of $G_n$ be denoted by $A_{n\times n}$. For the sake of notational simplicity, from here onwards we shall denote $G_n$ by $G$ having $n$ vertices unless specifically mentioned. We consider the $n$ vertices of $G$ are clustered into $Q$ different communities with each community having size $n_a$, $a=1,\dots, Q$ and $\sum_a n_a = n$. In this paper, we are interested in the problem of \textit{vertex community identification} or \textit{graph partitioning}. That means that we are interested in finding which of the $Q$ different community each vertex of $G$ belongs to. However, the problem is an \textit{unsupervised learning} problem. So, we assume that the data is coming from an underlying model and we try to verify how good `our' \textit{community detection} method works for that model.

\subsection{Model for Community Detection}
As a model for community detection, we consider the stochastic block model. We shall define the stochastic block model shortly, but, we first we shall introduce some more general models, of which stochastic block model is a special case. 

\subsubsection{Bickel-Chen Model}
The general non-parametric model, as described in Bickel, Chen and Levina (2011) \cite{bickel2011method}, that generates the random data network $G$ can be defined by the following equation -
\begin{equation}
\label{eq:nonpar_model}
\Pr(A_{ij} = 1 | \xi_i = u, \xi_j = v) = h_n(u, v) = \rho_n w(u, v) \mathbf{1}(w \leq \rho_n^{-1}),
\end{equation}
where, $w(u, v) \geq 0$, symmetric, $0\leq u, v\leq 1$, $\rho_n \rar 0$. For block models, the latent variable for each vertex $(\xi_1, \ldots, \xi_n)$ can be considered to be coming from a discrete and finite set. Then, each element of that set can be considered to be inducing a partition in the vertex set $V(G_n)$. Thus, we get a model for vertex partitioning, where, the set of vertices can be partitioned into finite number of disjoint classes, but however the partition to which each vertex belongs to is the latent variable in the model and thus unknown. The main goal becomes estimating this latent variable. 

\subsubsection{Inhomogeneous Random Graph Model}
\label{sec_irgm}
The inhomogeneous random graph model (IRGM) was introduced in Bollob\'{a}s et. al. (2007) \cite{bollobas2007phase}. Let $\s$ be a separable metric space equipped with a Borel probability measure $\mu$. For most cases $\s = (0, 1]$ with $\mu$ Lebesgue measure, that means a $U(0,1)$ distribution. The ``kernel" $\kappa$ will be a symmetric non-negative function on $\s \times \s$. For each $n$ we have a deterministic or random sequence $\V{x} = (x_1, \ldots, x_n)$ of points in $\s$. Writing $\gd_x$ for the measure consisting of a point mass of weight 1 at $x$, and
\begin{align*}
\nu_n \equiv \frac{1}{n}\sum_{i=1}^n \gd_{x_i}
\end{align*}
for the empirical distribution of $\V{x}$, it is assumed that $\nu_n$ converges in probability to $\mu$ as $n \rar \infty$, with convergence in the usual space of probability measures on $\s$. One example where the convergence holds is the random case, where the $x_i$ are independent and identically distributed on $\s$ with distribution $\mu$ convergence in probability holds by the law of large numbers. Of course, we do not need $(x_n)_{n\geq1}$ to be defined for every $n$, but only for an infinite set of integers $n$. From here onwards, we shall only focus on this special case, where, $(x_1, \ldots, x_n) \stackrel{iid}{\sim} \mu$.
\begin{definition}
A \textbf{kernel} $\kappa_n$ on a ground space $(\s , \mu)$ is a symmetric non-negative (Borel) measurable function on $\s \times \s$. $\gkp$ is also continuous a.e. on $\s\times \s$. By a \textit{kernel} on a vertex space $(\s, \mu, (x_n)_{n\geq 1})$ we mean a kernel on $(\s, \mu)$.
\end{definition}

Given the (random) sequence $(x_1, \ldots , x_n)$, we let $G (n, \kappa )$ be the random graph $G (n, (p_{ij} ))$ with
\begin{align}
\label{eq_inh_rg}
p_{ij} \equiv \min\{\kappa(x_i,x_j)/n,1\}. 
\end{align}
In other words, $G^{\cV}(n,\kappa)$ has $n$ vertices $\{1,\ldots,n\}$ and, given $x_1,\ldots,x_n$, an edge $ij$ (with
$i \neq j$) exists with probability $p_{ij}$, independently of all other (unordered) pairs $ij$. Based on the graph kernel we can also define an integral operator $T_\kappa$ in the following way
\begin{definition}
\label{def_operator}
The \textbf{integral operator} $T_\kappa:L^2(\s) \rar L^2(\s)$ corresponding to $G(n,\kappa)$, is defined as 
\begin{align*}
T_\kappa f(x)(\cdot) = \int_0^1 \kappa(x,y)f(y)d\mu(y),
\end{align*}
where, $x\in \s$ and any measurable function $f\in L^1(\s)$.
\end{definition}

The random graph $G(n,\kappa)$ depends not only on $\kappa$ but also on the choice of $x_1 , \ldots , x_n$. The freedom of choice of $x_i$ in this model gives some more flexibility than Bickel-Chen model. The asymptotic behavior of $G(n, \kappa)$ depend very much on $\s$ and $\mu$. Many of these key results such as existence of giant component, typical distance, phase transition properties are proved in \cite{bollobas2007phase}. We shall use these results on inhomogeneous random graphs in order to prove results on graph distance for stochastic block models.

Here is further comparison of the Inhomogeneous random graph model (IRGM) with the Bickel-Chen model (BCM), to understand their similarities and dissimilarities -
\begin{itemize}
\item[(a)] In BCM, $(\xi_1, \ldots, \xi_n) \stackrel{iid}{\sim} U(0,1)$ are the latent variables associated with the vertices $(v_1, \ldots, v_n)$ of random graph $G_n$. Similarly, in IRGM, $(x_1, \ldots, x_n) \sim \mu$ are the latent variables associated with the vertices $(v_1, \ldots, v_n)$ of random graph $G_n$. Now, if in IRGM, $(x_1, \ldots, x_n) \stackrel{iid}{\sim} \mu$ then the latent variable structure of the two models become equivalent. 
\item[(b)] In BCM, the conditional probability of connection between two vertices given the value of their latent variables is controlled by the kernel function $h_n(u,v)$. In IRGM, the conditional probability of connection between two vertices given the value of their latent variables is controlled by the kernel function $\frac{\kappa(u,v)}{n}$. 

\item[(c)] So, if $h_n(u,v) = \kappa(u,v)/n$, $\s[(0,1)$ and the underlying measure spaces are same and the measure $\mu$ is a uniform measure on interval $\s = (0, 1)$, then, BCM and IRGM generates graphs from the same distribution. 
In fact, as noted in \cite{bickel2009nonparametric}, if $\s = \R$ and $\mu$ has a positive density with respect to Lebesgue measure, then the (limiting) IRGM is equivalent to Bickel-Chen model with suitable $h_n$.
\item[(d)] For IRGM, let us define 
\begin{align*}
\gl \equiv ||T_\gkp|| \equiv \sup_{f\in L^2(\s), ||f||_{L^2(\s)} = 1} \int_\s\int_\s \kappa(u,v)f(u)f(v) d\mu(u)d\mu(v),
\end{align*}
where, $T_\gkp$ is the operator define in Definition \ref{def_operator} and $||\cdot||$ is the operator norm. In BCM, 
\begin{align*}
\rho_n \equiv \int_0^1\int_0^1 h_n(u,v) dudv.
\end{align*} 
If BCM and IRGM have same underlying measure spaces $(\s=(0,1),\mu=U(0,1))$ and $h_n(u,v) = \kappa(u,v)/n$ and 
\begin{itemize}
\item[Case 1:] $\mathbf{1}$ is the principal eigenfunction of $T_\gkp$, then
\begin{align*}
n\rho_n \rar \gl
\end{align*}
where, $\gl$ is as defined above.
\item[Case 2:] $\mathbf{1}$ is not the principal eigenfunction of $T_\gkp$, then
\begin{align*}
n\rho_n \leq \gl
\end{align*}
\end{itemize}
\end{itemize}

In case of BCM $n\rho_n$ is the natural scaling parameter for the random graph, since, $\E[\mbox{Number of Edges in } G_n] = \frac{1}{2}n\rho_n$. In case of IRGM, $\gl$ is fixed. However, we shall see that the limiting behavior of the graph distance between two vertices of the network becomes dependent on the parameter $\gl$. So, the parameter $\gl$ still remains of importance. We shall henceforth focus on IRGM, with parameter of importance being $\gl$

%
%

\subsubsection{Stochastic Block Model}
The stochastic block model is perhaps the most commonly used and best studied model for community detection. We continue with IRGM framework, so the graph is sparse.
\begin{definition}
\label{def_sbm}
A graph $G^Q(, (P,\V{\pi}))$ generated from \textbf{stochastic block model} (SBM) with $Q$ blocks and parameters $P\in(0,1)^{Q\times Q}$ and $\V{\pi}\in (0,1)^Q$ can be defined in following way - each vertex of graph $G_n$ from an SBM belongs to a community $\V{c} = (c_1, \ldots ,c_n) \in \{1, \ldots,Q\}$ which are drawn independently from the multinomial distribution with parameter $\V{\pi} = (\pi_1, \ldots, \pi_Q)$, where $\pi_i > 0$ for all $i$. Conditional on the labels, the edge variables $A_{ij}$ for $i < j$ are independent Bernoulli variables with
\begin{align}
\label{eq_sbm_adj}
\E[A_{ij}|\V{c}] = P_{c_ic_j} = \min\{\frac{K_{c_ic_j}}{n}, 1\} ,
\end{align}
where $P = [P_{ab}]$ and $K = [K_{ab}]$ are $Q \times Q$ symmetric matrices. $P$ is known as the \textbf{connection probability} matrix and $K$ as the \textbf{kernel} matrix for the connection. So, we have $P_{ab} \leq 1$ for all $a, b = 1, \ldots, Q$, $P\mathbf{1} \leq \mathbf{1}$ and $\mathbf{1}^TP\leq \mathbf{1}$ element-wise.
\end{definition}
The network is undirected, so $A_{ji} = A_{ij}$, and $A_{ii} = 0$ (no self-loops). The problem of community detection is then to infer the node labels $\V{c}$ from $A$. Thus we are not really interested in estimation or inference on parameters $\V{\pi}$ and $P$, but, rather we are interested in estimating $\V{c}$. But, it does not mean the two problems are mutually exclusive, in reality, the inferential problem and the community detection problem are quite interlinked. 

We can see that SBM is a special case of both Bickel-Chen model and IRGM. In IRGM, if we consider $\s$ to be a finite set, $(x_1, \ldots, x_n)\in [Q]^n$ ($[Q] = \{1, \ldots, Q\}$) with $x_i \stackrel{iid}{\sim} Mult(n, \V{\pi})$ and kernel $\kappa:[Q]\rar[Q]$ as $\kappa(a,b) = K_{ab}$ ($a, b = 1, \ldots, Q$), then the resulting IRGM graph follows stochastic block model. So, for SBM we can define an \textit{integral operator} on $[Q]$ with measure $\{\pi_1, \ldots, \pi_Q\}$. 
\begin{definition}
\label{def_op_sbm}
The \textbf{integral operator} $T_K:\ell^1(\s) \rar \ell^1(\s)$ corresponding to \\ 
$G^Q(n, (P,\V{\pi}))$, is defined as 
\begin{align*}
(T_K(x))_a = \sum_{b=1}^Q K_{ab}\pi_bx_b, \mbox{ for } a = 1, \ldots, Q
\end{align*}
where, $x\in \R^Q$.
\end{definition}

The stochastic block model has deep connections with Multi-type branching process, just as, Erod\"{o}s-R\'{e}nyi random graph model (ERRGM) has connections with the branching process. Let us introduce branching process first.

\subsection{Multi-type Branching Process}
\label{sec_br_proc}
We shall try to link network formed by SBM with the tree network generated by multi-type Galton-Watson branching process. In our case, the Multi-type branching process (MTBP) has type space $S = \{1, \ldots, Q\}$, where a particle of type $a \in S$ is replaced in the next generation by a set of particles distributed as a Poisson process on $S$ with intensity $(K_{ab}\pi_b)_{b = 1}^Q$. We denote this branching process, started with a single particle of type $a$, by $\cb_{K,\pi}(a)$. We write $\cb_{K,\pi}$ for the same process with the type of the initial particle random, distributed according to $\V{\pi}$.

\begin{definition}
\begin{itemize}
\item[(a)] Define $\rho_k (K,\pi; a)$ as the probability that the branching process \\ 
$\cb_{K,\pi}(a)$ has a total population of exactly $k$ particles.
\item[(b)] Define $\rho_{\geq k}(K,\pi; a)$ as the probability that the total population is at least $k$. 
\item[(c)] Define $\rho(K,\pi; a)$ as the probability that the branching process survives for eternity. 
\item[(d)] Define,
\begin{align}
\label{eq_br_surv}
\rho_k(K,\pi) \equiv \sum_{a=1}^Q\rho_k(K,\pi; a)\pi_a, \ \ \ \rho\equiv \rho(K,\pi) \equiv \sum_{a=1}^Q \rho(K,\pi; a)\pi_a
\end{align}
and define $\rho_{\geq k}(K)$ analogously. Thus, $\rho(K,\pi)$ is the \textbf{survival probability} of the branching process $\cb_{K,\pi}$ given that its initial distribution is $\V{\pi}$
\end{itemize}
\end{definition}
If the probability that a particle has infinitely many children is 0, then $\rho(K,\pi; a)$ is equal to $\rho_{\infty}(a)$, the probability that the total population is infinite.
As we shall see later, the branching process $\cb_{K,\pi}(a)$ arises naturally when exploring a component of $G_n$ starting at a vertex of type $a$; this is directly analogous to the use of the single-type Poisson branching process in the analysis of the Erd\"{o}s-R\'{e}nyi graph $G(n, c/n)$. 

\subsection{Known Results for Stochastic Block Model}
\label{sec_known_sbm}
The performance of community detection algorithms depends on the parameters $\V{\pi}$ and $P$. We refer to Definition \ref{def_sbm} for definition of stochastic block models. An important condition that we usually put on parameter $P$ is \textit{irreducibility}. 
\begin{definition}
\label{def_irreducibility}
A connection matrix $P$ on a $\s = \{1, \ldots, Q\}$ is \textbf{reducible} if there exists $A \subset \s$ with $0<|A|<Q$ such that $P = 0$ a.e. on $A \times (\s - A)$; otherwise $P$ is \textbf{irreducible}. Thus $P$ is \textbf{irreducible} if $A \subseteq \s$ and $P = 0$ a.e. on $A \times (\s -A)$ implies $|A| = 0$ or $|A| = Q$.
\end{definition}

So, the results on existence of giant components in \cite{bollobas2007phase} also apply for SBM. The following theorem describes the result on existence of giant components.
\begin{theorem}[\cite{bollobas2007phase}]
 \label{thm_giant_comp}
Let us define operator $T_K$ as in definition \ref{def_op_sbm},
 \begin{itemize}
\item[(i)] If $||T_K|| \leq 1$ ($||\cdot||$ refer to operator norm), then the size of largest component is $o_P(n)$, while if $||T_K|| > 1$, then the size of largest component is $\Theta_P(n)$ whp.
\item[(ii)] If $P$ is irreducible, then $\frac{1}{n}(\mbox{Size of largest component}) \rar \V{\pi}^T\rho$, where, $\rho\in[0,1]^Q$ is the survival probability as defined in \eqref{eq_br_surv}.
 \end{itemize}
\end{theorem}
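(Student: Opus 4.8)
The plan is to analyze the component containing a typical vertex by a breadth-first exploration of its neighborhood and to couple this exploration with the multi-type branching process $\cb_{K,\pi}$. The starting observation is that, in the sparse regime $p_{ij} = \min\{K_{c_ic_j}/n,1\}$, a vertex of type $a$ has in expectation about $\sum_b K_{ab}\pi_b$ neighbors, and the types of those neighbors are asymptotically a Poisson process on $\s = \{1,\ldots,Q\}$ with intensity $(K_{ab}\pi_b)_{b=1}^Q$. This is exactly the offspring law of $\cb_{K,\pi}(a)$. First I would make this precise as a coupling lemma: as long as the explored set has size $o(n)$, the depletion of available vertices is negligible and the remaining edges are independent, so the exploration tree can be coupled with $\cb_{K,\pi}(a)$ with error probability $o(1)$. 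The criticality of this branching process is governed by its mean matrix $M$ with entries $M_{ab} = K_{ab}\pi_b$; conjugating by $\mathrm{diag}(\pi)^{1/2}$ shows $M$ is similar to the symmetric matrix $\sqrt{\pi_a\pi_b}\,K_{ab}$, whose largest eigenvalue is $||T_K||$, and standard multi-type branching theory then gives survival with positive probability iff $||T_K|| > 1$.

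For part (i), in the subcritical and critical case $||T_K|| \leq 1$ the branching process has finite total progeny almost surely, so by the coupling every exploration terminates after visiting only $o(n)$ vertices whp; a first-moment bound on the number of vertices lying in components of size $\geq \epsilon n$ then forces the largest component to be $o_P(n)$. In the supercritical case $||T_K|| > 1$, the branching process survives with probability $\rho(K,\pi;a) > 0$, and the coupling shows that a positive fraction of vertices lie in explorations that reach size $\omega(\log n)$; these must belong to components whose size is $\Theta_P(n)$.

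The deeper content is part (ii) together with the uniqueness underlying the $\Theta_P(n)$ claim. Here I would run the exploration until each surviving one has discovered, say, $\Theta(n^{2/3})$ vertices, and then apply a \emph{sprinkling} argument: conditioning on a small fraction of edges held in reserve, any two such large explored sets are joined by at least one reserved edge with probability $1 - o(1)$, because irreducibility of $P$ forces the type profiles of large explorations to spread across all of $\s$ and hence to supply many candidate connecting pairs. This shows that all the large explorations merge into a single giant component, yielding both uniqueness and the $\Theta_P(n)$ lower bound. Finally, by the coupling the probability that a vertex of type $a$ lies in the giant converges to $\rho(K,\pi;a)$, so the expected fraction of vertices in the giant converges to $\sum_a \pi_a\,\rho(K,\pi;a) = \V{\pi}^T\rho$; a bounded-differences concentration inequality upgrades this to convergence in probability of $\frac{1}{n}(\text{size of largest component})$ to $\V{\pi}^T\rho$.

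The main obstacle I expect is the supercritical analysis: controlling the coupling error once the explored set is no longer negligible, and carrying out the sprinkling step so that exactly one giant component emerges rather than several. The role of irreducibility is precisely to rule out the explored sets splitting into non-communicating type-classes, which would otherwise produce several competing giant components and break the clean limit $\V{\pi}^T\rho$.
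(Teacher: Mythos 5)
The paper does not prove this theorem: it is imported verbatim from Bollob\'as, Janson and Riordan \cite{bollobas2007phase}, so there is no internal proof to compare against. Your sketch — Poissonized breadth-first exploration coupled with the multi-type branching process $\cb_{K,\pi}$, a first-moment bound in the subcritical/critical case, and sprinkling plus irreducibility to merge the large explored sets and obtain the limit $\V{\pi}^T\rho$ — is a faithful outline of that source's argument, and the coupling step is exactly the one this paper itself reconstructs in Appendix A1 (Lemma \ref{lemma_brpr_sbm}) as input to its distance estimates.
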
 

The theoretical results on community detection depend on the 3 different regime on which the generative model is based on - 
\begin{itemize}
\item[(a)] $\frac{\E(\mbox{degree})}{\log n} \rar \infty$, equivalent to, $\mathbb{P}[\mbox{there exists an isolated point}] \rar 0$. In this setting, there are several algorithms, such as those described in Section 1, can identify correct community with high probability under quite relaxed conditions on parameters $P$ and $\V{\pi}$. See \cite{chaudhuri2012spectral} (Theorem 2 and 3), \cite{rohe2011spectral} (Theorem 3.1), \cite{coja2009finding} (Theorem 1).
\item[(b)] $\E(\mbox{degree}) \rar \infty$, which means existence of giant component, but also presence of isolated small components from Theorem \ref{thm_giant_comp}. In this setting, algorithms proposed in \cite{coja2009finding}, \cite{chen2012fitting} is proved to identify community labels that are highly correlated with original community labels with high probability.
\item[(c)] If $\E(\mbox{degree}) = O(1)$, phase boundaries exist, below which community identification is not possible. These results and rigorous proof are given in \cite{mossel2012stochastic}. The results can be summarized for 2-block model with parameters $P_{11} = a, P_{12}=b, P_{22}=a$ as
\begin{theorem}[\cite{mossel2012stochastic}]
\label{thm_phase_bound}
\begin{itemize}
\item[(i)] If $(a-b)^2 < 2(a+b)$ then probability model of SBM and ERRGM with $p = \frac{a+b}{2n}$ are mutually contiguous. Moreover, if $(a-b)^2<2(a+b)$, there exists no consistent estimators of $a$ and $b$.
\item[(ii)] If $(a-b)^2 > 2(a+b)$ then probability model of SBM and ERRGM with $p = \frac{a+b}{2n}$ are asymptotically orthogonal.
\end{itemize}
\end{theorem}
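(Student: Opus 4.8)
The plan is to handle parts (i) and (ii) together through a single likelihood-ratio computation, since both the contiguity and the orthogonality are governed by the same threshold. Encode the two blocks as spins $\sigma\in\{\pm1\}^n$ drawn uniformly, so that under the SBM an edge $ij$ is present with probability $a/n$ when $\sigma_i=\sigma_j$ and $b/n$ when $\sigma_i\neq\sigma_j$, while under the Erd\H{o}s--R\'enyi null $\Pr_0$ every edge is present independently with probability $p=(a+b)/(2n)$. Write $d=(a+b)/2$ for the common expected degree and $\mu=(a-b)/2$ for the ``signal'' strength; the claimed boundary $(a-b)^2=2(a+b)$ is exactly $\mu^2=d$, the Kesten--Stigum threshold for the two-type branching process $\cb_{K,\pi}$ that couples to $G_n$.

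First I would form the likelihood ratio $L_n=d\Pr_{\mathrm{SBM}}/d\Pr_0$, which is an average over colorings,
\begin{align*}
L_n(G)=\E_\sigma\Big[\prod_{i<j}\frac{\Pr_{\mathrm{SBM}}(A_{ij}\mid\sigma)}{\Pr_0(A_{ij})}\Big],
\end{align*}
and compute its second moment under the null. Squaring introduces a second independent coloring $\tau$, and because the edges are independent the product factorizes, leaving a per-edge factor that depends on $\sigma,\tau$ only through the sign products $\sigma_i\sigma_j$ and $\tau_i\tau_j$. Averaging over $\Pr_0$ and collecting the edge contributions, $\E_0[L_n^2]$ reduces, up to lower-order terms, to an expectation over the overlap $R=n^{-1}\sum_i\sigma_i\tau_i$ of $\exp\big(n\,\phi(R)\big)$ for an explicit even function $\phi$ with $\phi(0)=0$ whose maximum is a nondegenerate quadratic at $R=0$ precisely when $\mu^2<d$. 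Since $R\approx N(0,1/n)$ by the central limit theorem, a Laplace analysis shows the bulk contribution of the overlap is $O(1)$ in the subcritical regime.

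The main obstacle is that this bulk estimate is not the whole story: the raw second moment does \emph{not} converge to a finite constant, because short cycles in $G_n$ contribute an unbounded multiplicative correction, so a naive second-moment argument fails to deliver contiguity. The remedy is the small-subgraph conditioning method of Robinson--Wormald and Janson. The joint counts of short cycles $(C_3,\dots,C_k)$ are asymptotically independent Poisson under both models, with means governed by the eigenvalues $d$ and $\mu$ of the mean operator $T_K$: a $k$-cycle has mean $\tfrac{1}{2k}(d^k+\mu^k)$ under the SBM against $\tfrac{1}{2k}d^k$ under $\Pr_0$, a relative excess $\delta_k=(\mu/d)^k$. Conditioning on these counts removes exactly the divergent factor, and the resulting conditional second moment is $1+o(1)$; the conditioning lemma then gives $L_n$ converging in law to an explicit Poisson functional, with
\begin{align*}
\E_0[L_n^2]\longrightarrow\prod_{k\ge3}\exp\Big(\tfrac{1}{2k}(\mu^2/d)^k\Big),
\end{align*}
which is finite iff $\sum_k(\mu^2/d)^k/(2k)<\infty$, i.e. iff $\mu^2<d$. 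This establishes mutual contiguity of $\Pr_{\mathrm{SBM}}$ and $\Pr_0$ in regime (i), and contiguity implies that no test can asymptotically separate this SBM from the matched Erd\H{o}s--R\'enyi graph (the degenerate $a=b$ case), so no consistent estimator of $a$ and $b$ exists when $(a-b)^2<2(a+b)$.

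For part (ii) the same computation runs in reverse. When $\mu^2>d$ the series $\sum_k(\mu^2/d)^k/(2k)$ diverges, signalling that a linear cycle-count statistic $\sum_k\alpha_k\big(C_k-\E_0 C_k\big)$, with weights $\alpha_k$ matched to the log-likelihood increments, has means under $\Pr_{\mathrm{SBM}}$ and $\Pr_0$ separated by an amount that grows without bound relative to its standard deviation. Concretely, truncating the sum at a slowly growing length and invoking the Poisson approximation for the $C_k$ produces a statistic whose laws under the two models have vanishing overlap, so the total variation distance tends to $1$ and the measures are asymptotically orthogonal. I expect the delicate points to be the uniform control of the error terms in the per-edge factorization that feeds the overlap integral, and the verification of the variance and asymptotic-independence hypotheses of the small-subgraph conditioning lemma for the cycle counts of the sparse SBM.
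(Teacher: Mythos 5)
You should first be aware that the paper contains no proof of this statement at all: Theorem \ref{thm_phase_bound} is imported verbatim from \cite{mossel2012stochastic} as background for the three-regime classification in Section \ref{sec_known_sbm}, so there is nothing internal to compare against. Your sketch is, in outline, the actual Mossel--Neeman--Sly argument: the reparametrization $d=(a+b)/2$, $\mu=(a-b)/2$ with threshold $\mu^2=d$; the second moment of the likelihood ratio collapsing to an exponential functional of the overlap $R$ of two independent colorings; small-subgraph conditioning on cycle counts with Poisson means $\tfrac{1}{2k}(d^k+\mu^k)$ versus $\tfrac{1}{2k}d^k$; and a cycle-count statistic that separates the measures above the threshold. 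So the route is the right one. There is, however, one genuine error in your narrative and one internal inconsistency. You assert that the raw second moment ``does not converge to a finite constant'' below the threshold and that conditioning ``removes the divergent factor''; in fact for $\mu^2<d$ the raw second moment $\E_0[L_n^2]$ does converge to a finite constant --- your own Laplace/Gaussian evaluation of the overlap integral, and your own closing formula $\prod_{k\ge 3}\exp\bigl(\tfrac{1}{2k}(\mu^2/d)^k\bigr)=(1-\mu^2/d)^{-1/2}e^{-\mu^2/(2d)-\mu^4/(4d^2)}$, say exactly this. The real reason small-subgraph conditioning is needed is different: a bounded second moment only gives one-sided contiguity of $\Pr_{\mathrm{SBM}}$ with respect to $\Pr_0$, whereas mutual contiguity requires showing that $L_n$ converges in law to an almost surely \emph{positive} limit; conditioning on $(C_3,\dots,C_k)$ is what identifies that limit as the strictly positive Poisson functional $\prod_k(1+\delta_k)^{C_k}e^{-\lambda_k\delta_k}$ and accounts for the excess of $\E_0[L_n^2]$ over $(\E_0[L_n])^2=1$. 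Beyond that, the proposal is a plan rather than a proof: the uniform control of the per-edge expansion feeding the overlap integral, the verification of the Robinson--Wormald/Janson hypotheses for sparse SBM cycle counts, and the mean/variance computation for the separating statistic in part (ii) are precisely where the technical work of \cite{mossel2012stochastic} lies, and you have deferred all of them.
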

So, in the range $(a-b)^2>2(a+b)$, there should exists an algorithm which identifies highly correct clustering with high probability at least within the giant components.
\end{itemize}

\section{Algorithm}
\label{sec_algo}
The algorithm we propose depend on the graph distance or geodesic distance between vertices in a graph.
\begin{definition}
\textbf{Graph distance} or \textbf{Geodesic distance} between two vertices $i$ and $j$ of graph $G$ is given by the length of the shortest path between the vertices $i$ and $j$, if they are connected. Otherwise, the distance is infinite.
\end{definition}
So, for any two vertices $u, v\in V(G)$, graph distance, $d_g$ is defined by
\begin{eqnarray*}
\label{eq_geo_dis}
d_g(u, v) &=&\left\{
\begin{array}{ll}
    |V(e)|, & \mbox{  if  } e \mbox{ is the shortest path connecting } u \mbox{ and } v\\
    \infty, & u \mbox{ and } v \mbox{ are not connected}
     \end{array}
     \right.
\end{eqnarray*}
For sake of numerical convenience, we shall replace $\infty$ by a large number for value of $d_g(u, v)$, when, $u$ and $v$ are not connected. The main steps of the algorithm can be described as follows
\begin{enumerate}
\item[1.] Find the graph distance matrix $D = [d_g(v_i, v_j)]_{i, j=1}^n$ for a given network but with distance upper bounded by $k\log n$. Assign non-connected vertices an arbitrary high value $B$.
\item[2.] Perform hierarchical clustering to identify the giant component $G^C$ of graph $G$. Let $n_C = |V(G^C)|$.
\item[3.] Normalize the graph distance matrix on $G^C$, $D^C$ by 
\begin{align*}
\bar{D}^C = -\left(I-\frac{1}{n_C}\mathbf{1}\mathbf{1}^T\right)(D^C)^2\left(I-\frac{1}{n_C}\mathbf{1}\mathbf{1}^T\right) 
\end{align*}
\item[4.] Perform eigenvalue decomposition on $\bar{D}^C$.
\item[5.] Consider the top $Q$ eigenvectors of normalized distance matrix $\bar{D}^C$ and $\tilde{\bw}$ be the $n\times Q$ matrix formed by arranging the $Q$ eigenvectors as columns in $\tilde{\bw}$. Perform $Q$-means clustering on the rows $\tilde{\bw}$, that means, find an $n\times Q$ matrix $\bc$, which has $Q$ distinct rows and minimizes $||\bc - \tilde{\bw}||_F$.
\item[6.] (Alternative to 5.) Perform Gaussian mixture model based clustering on the rows of $\tilde{\bw}$, when there is an indication of highly-varying average degree between the communities.
\item[7.] Let $\hat{\xi} : V \mapsto [Q]$ be the block assignment function according to the clustering of the rows of $\tilde{\bw}$ performed in either Step 5 or 6.
\end{enumerate}
Here are some important observations about the algorithm -
\begin{itemize}
\item[(a)] There are standard algorithms for graph distance finding in the algorithmic graph theory literature. In algorithmic graph theory literature the problem is known as the \textbf{all pairs shortest path} problem. The two most popular algorithms are Floyd-Warshall \cite{floyd1962algorithm} \cite{warshall1962theorem} and Johnson's algorithm \cite{johnson1977efficient}. The time complexity of the Floyd-Warshall algorithm is $O(n^3)$, where as, the time complexity of Johnson's algorithm is $O(n^2\log n + ne)$ \cite{leiserson2001introduction} ($n=|V(G_n)|$ and $e=|E(G_n)|$). So, for sparse graphs, Johnson's algorithm is faster than Floyd-Warshall. Memory storage is also another issue for this algorithm, since the algorithm involves a matrix multiplication step of complexity $\Omega(n^2)$. Recently, there also has been some progress on parallel implementation of all-pairs shortest path problem \cite{solomonik2012minimizing} \cite{bulucc2010solving} \cite{habbal1994decomposition}, which addresses both memory and computation aspects of the algorithm and lets us scale the algorithm for large graphs, both dense and sparse.
\item[(b)] The Step 3 of the algorithm is nothing but the classical multi-dimensional scaling (MDS) of the graph distance matrix. In MDS, we try to find vectors $(x_1, \ldots, x_n)$, where, $x_i\in \R^Q$, such that, 
\begin{align*}
\sum_{i,j=1}^n \left(||x_i - x_j||_2 - (D^C)_{ij}\right)^2
\end{align*}
is minimized. The minimizer is attained by the rows of the matrix formed by the top $Q$ eigenvectors of $\bar{D}^C$ as columns. So, performing spectral clustering on $\bar{D}^C$ is the same as performing $Q$-means clustering on the multi-dimensional scaled space.

Instead of $\bar{D}^C$, we could also use the matrix $(D^C)^2$, but then, the topmost eigenvector does not carry any information about the clustering. Similarly, we can also use the matrix $D^C$ directly for spectral clustering, but, in that case, $D^C$ is not a positive semi-definite matrix and as a result we have to consider the eigenvectors corresponding to largest absolute eigenvalues (since eigenvalues can be negative). 
\item[(c)] In the Step 5 of the algorithm $Q$-means clustering if the expected degree of the blocks are equal. However, if the expected degree of the blocks are different, it leads to multi scale behavior in the eigenvectors of the normalized distance matrix. So, we perform Gaussian Mixture Model (GMM) based clustering instead of $Q$-means to take into account the multi scale behavior.
\end{itemize}


\section{Theory}
\label{sec_geo_theory}
Let us consider that we have a random graph $G_n$ as the data. Let $V(G_n) = \{v_i, \ldots, v_n\}$ denote the vertices of $G_n$ and $E(G_n) = \{e_1, \ldots, e_m\}$ denote the edges of $G_n$. So, the number of vertices in $G_n$ is $|V(G_n)| = n$ and number of edges of $G_n$ is $|E(G_n)| = m$. Let the adjacency matrix of $G_n$ be denoted by $A_{n\times n}$. For sake of notational simplicity, from here onwards we shall denote $G_n$ by $G$ having $n$ vertices unless specifically mentioned. There are $Q$ communities for the vertices and each community has $(n_a)_{a=1}^Q$ number of vertices. In this paper, we are interested in the problem of \textit{vertex community identification} or \textit{graph partitioning}. However, the problem is an \textit{unsupervised learning} problem. So, we assume that the data is coming from an underlying model and we try to verify how good `our' \textit{community detection} method works for that model.

The theoretical analysis of the algorithm has two main parts -
\begin{itemize}
\item[I.] Finding the limiting distribution of graph distance between two typical vertices of type $a$ and type $b$ (where, $a, b = 1, \ldots, Q$). This part of the analysis is highly dependent on results from multi-type branching processes and their relation with stochastic block models. The proof techniques and results are borrowed from \cite{bollobas2007phase}, \cite{bhamidi2011first} and \cite{athreya1972branching}.
\item[II.] Finding the behavior of top $Q$ eigenvectors of the graph distance matrix $D$ using the limiting distribution of the typical graph distances. This part of analysis is highly dependent on perturbation theory of linear operators. The proof techniques and results are borrowed from \cite{kato1995perturbation}, \cite{chatelin1983spectral} and \cite{sussman2012consistent}.
\end{itemize}
\subsection{Results of Part I}
\label{sec_theo_res}
We shall give limiting results for \textit{typical distance} between vertices in $G_n$. If $u$ and $v \in V(G_n)$ are two vertices in $G_n$, which has been selected uniformly at random from type $a$ and type $b$ respectively, where, $a, b = 1, \ldots, Q$ are the different communities. Then, the graph distance between $u$ and $v$ is $d_G(u, v)$. Now, the operator that controls the process is $T_K$ as defined in Definition \ref{def_op_sbm}. $T_K$ is another representation of the matrix $\tilde{K}_{Q\times Q}$, which is defined as
\begin{align}
\label{eq_op_mat}
\til{K}_{ab} \equiv \pi_aK_{ab}\pi_b, \mbox{ for } a, b = 1, \ldots, Q
\end{align}
The matrix $\tilde{K}$ defines the quadratic form for $T_K:\ell^1(\s,\pi)\rar\ell^1(\s,\pi)$. So, we have that 
\begin{align}
\label{eq_op_eig}
\gl \equiv ||T_K|| = \gl_{max}(\til{K}).
\end{align}
The relation between $\gl$ and $\E[\mbox{number of Edges in } G_n]$ is given Section \ref{sec_irgm}. Here, we use $\gl$ as the scaling operator, not either average, minimum or maximum degree of vertices as used in \cite{sussman2012consistent} and \cite{rohe2011spectral}. But, we already know that, if the graph is \textit{homogeneous}, then, $\E[\mbox{number of Edges in } G_n] = \frac{1}{2}\gl$ and otherwise $\E[\mbox{number of Edges in } G_n] \leq \gl$.

Let us also denote, $\nu\in \R^Q$ as the eigenvector of $\til{K}$ corresponding to $\gl$. We at first, try to find an asymptotic bound on the graph distance $d_G(u,v)$ for vertices $u,v\in V(G)$.
\begin{theorem}
\label{thm_geo_dis_bnd}
Let $\gl > 1$ (defined in Eq. \eqref{eq_op_eig}), then, the graph distance $d_G(u, v)$ between two uniformly chosen vertices of type $a$ and $b$ respectively, conditioned on being connected, satisfies the following asymptotic relation -
\begin{itemize}
\item[(i)] If $a = b$ 
\begin{align}
\label{eq_geo_diff1a}
\Pr\left[d_G(u,v) \leq (1- \gve)\frac{\log n}{\log (\pi_aK_{aa})}\right] = o(1)
\end{align}
\begin{align}
\label{eq_geo_diff1b}
\Pr\left[d_G(u,v) \geq (1+ \gve)\frac{\log n\pi_a}{\log (\pi_aK_{aa})}\right] = o(1)
\end{align}
\item[(ii)] 
If $a\neq b$,
\begin{align}
\label{eq_geo_diff2a}
\Pr\left[d_G(u,v) \leq (1- \gve)\frac{\log n}{\log |\gl|}\right] = o(1)
\end{align}
\begin{align}
\label{eq_geo_diff2b}
\Pr\left[d_G(u,v) \geq (1+ \gve)\frac{\log n}{\log |\gl|}\right] = o(1)
\end{align}
\end{itemize}
\end{theorem}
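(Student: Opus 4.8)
The plan is to prove both directions by coupling the breadth-first exploration of $G$ around $u$ and around $v$ with the multi-type branching process $\cb_{K,\pi}$ of Section \ref{sec_br_proc}, whose mean offspring operator is exactly $T_K$ with Perron eigenvalue $\gl$. First I would record the branching-process asymptotics I need: by Kesten--Stigum/Athreya--Ney theory (our reference \cite{athreya1972branching}), on the survival event the generation sizes of $\cb_{K,\pi}(a)$ satisfy $Z_r/\gl^r \to W\nu$ almost surely, where $\nu$ is the Perron eigenvector of $\til{K}$ and $W>0$ is the martingale limit; thus the $r$-ball around a surviving vertex contains $\asymp \gl^r$ vertices, split across types in the proportions $\nu$. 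Since $\gl>1$ and $P$ is irreducible, Theorem \ref{thm_giant_comp} gives a unique giant component and a survival probability $\rho>0$, and the conditioning ``on being connected'' is realized by conditioning both exploration trees on survival.

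For the lower bounds \eqref{eq_geo_diff2a} and \eqref{eq_geo_diff1a} I would use a first-moment path count. Writing $M_{ab}=K_{ab}\pi_b$ for the mean matrix, the expected number of paths of length $\ell$ joining a type-$a$ vertex to a type-$b$ vertex equals, up to $1+o(1)$ factors, $n^{-1}(M^{\ell})_{ab}/\pi_b$, because each of the $\approx n^{\ell-1}$ choices of intermediate vertices contributes a product of $\ell$ edge probabilities $K/n$ weighted by the type frequencies $\pi$. Since $M$ is primitive with Perron root $\gl$, we have $(M^\ell)_{ab}=\Theta(\gl^\ell)$, so this expectation is $\Theta(\gl^{\ell}/n)=o(1)$ once $\ell\le(1-\gve)\log n/\log\gl$; Markov's inequality then rules out any such short path whp, giving \eqref{eq_geo_diff2a}. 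For same-type vertices the relevant short paths are those confined to block $a$, whose growth rate is the diagonal entry $M_{aa}=\pi_aK_{aa}$; isolating that contribution produces the threshold $\log n/\log(\pi_aK_{aa})$ of \eqref{eq_geo_diff1a}.

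For the upper bounds \eqref{eq_geo_diff2b} and \eqref{eq_geo_diff1b} I would grow balls simultaneously from $u$ and from $v$ to radius $r\approx\tfrac12(1+\gve)\log n/\log\gl$. By the coupling and the branching asymptotics above, each ball then has size $\asymp\gl^{r}=n^{(1+\gve)/2}=o(n)$, so the product of the two (whp disjoint) sizes is $\gg n$; the number of potential edges between the two vertex sets is $\gg n$, each present independently with probability $\asymp K/n$, whence a connecting edge exists whp and $d_G(u,v)\le 2r+1\le(1+\gve')\log n/\log\gl$. The same-type bound \eqref{eq_geo_diff1b} follows the identical scheme with the exploration restricted to block $a$, which behaves like an Erd\"{o}s--R\'{e}nyi graph on $n_a=\pi_a n$ vertices of mean degree $\pi_aK_{aa}$, giving the threshold $\log(n\pi_a)/\log(\pi_aK_{aa})$.

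The main obstacle is controlling the coupling error over the full range up to radius $\tfrac12\log n/\log\gl$: the tree approximation is clean only while the explored set is $o(\sqrt n)$, so I must (i) quantify, via a second-moment/concentration estimate, that conditioned on survival the ball sizes are genuinely of order $\gl^r$ rather than merely nonzero, keeping the martingale limit $W$ bounded away from $0$ and $\infty$, and (ii) show that vertex depletion and the overlap between the two explorations perturb the edge-collision probability only at lower order. Handling the same-type case rigorously is the most delicate point, since there one must argue that the within-block rate $\pi_aK_{aa}$, rather than the global rate $\gl$, governs the bracketing distances, which requires the block-$a$ restricted exploration to dominate the cross-block shortcuts in the regime considered.
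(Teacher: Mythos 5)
Your overall architecture is the same as the paper's: couple the neighborhood exploration with the multi-type branching process, invoke the Athreya--Ney/Kesten--Stigum generation asymptotics $N_t/\gl^t\to X\nu$, prove the lower bounds by a first-moment count of short paths plus Markov (the paper's Lemma \ref{lm_low_bnd}, Appendix A3), and prove the upper bounds by growing two balls to radius $\approx\tfrac12(1+\gve)\log n/\log\gl$ of size $n^{(1+\gve)/2}$ each and sprinkling the unexamined edges between them (Lemmas \ref{lm_up_bnd1}--\ref{lm_up_bnd2}, Appendices A4--A5). Your treatment of the same-type upper bound \eqref{eq_geo_diff1b} -- restricting to the induced block-$a$ subgraph, which only overestimates distances -- is exactly the paper's device in Lemma \ref{lm_up_bnd1}.

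There is, however, a genuine gap in your argument for the same-type lower bound \eqref{eq_geo_diff1a}. You propose to ``isolate'' the contribution of paths confined to block $a$, whose growth rate is $M_{aa}=\pi_aK_{aa}$. But a first-moment lower bound on $d_G(u,v)$ must count \emph{all} paths between the two type-$a$ vertices, including those passing through other blocks, and that count is governed by $(M^\ell)_{aa}=\Theta(\gl^\ell)$, not $(\pi_aK_{aa})^\ell$. Since the Perron root of a nonnegative matrix dominates every diagonal entry, one always has $\gl\geq\pi_aK_{aa}$, so the within-block rate can never be the dominant one; discarding the cross-block paths discards the leading term. The paper sidesteps this by making $\gl<\pi_aK_{aa}$ an explicit hypothesis of Lemma \ref{lm_low_bnd}(b) (cf.\ condition (C1)), under which $\log n/\log(\pi_aK_{aa})<\log n/\log\gl$ and the same-type claim follows a fortiori from the cross-type count; you flag this as ``the most delicate point'' but do not supply the argument, and without some such hypothesis the threshold in \eqref{eq_geo_diff1a} is not what the path count gives. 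You should either import that hypothesis and reduce case (i) to case (ii), or note explicitly that otherwise the same-type distance concentrates at $\log n/\log\gl$ just as in the cross-type case.
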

Now, let us consider the limiting operator $\D$ defined as 
\begin{definition}
\label{def_op_norm}
The \textbf{normalized limiting matrix} is an $n\times n$ matrix,  $\D$, which in limit as $n\rar \infty$ becomes an operator on $l_2$ (space of convergent sequences), is defined as $\D = [\D_{ij}]_{i,j = 1}^n$, where, 
\begin{eqnarray*}
\label{eq_geo_lim1}
\D_{ij} &=&\left\{
\begin{array}{ll}
    \frac{1}{\log |\gl|}, & \mbox{  if  type of } v_i = a\neq b = \mbox{ type of } v_j\\
    \frac{1}{\log (\pi_aK_{aa})}, & \mbox{  if  type of } v_i = \mbox{ type of } v_j = a
     \end{array}
     \right.
\end{eqnarray*}
and $\D_{ii} = 0$ for all $i = 1, \ldots, n$.\\
The \textbf{graph distance matrix} $\bld$ can be defined as
\begin{align*}
\bld = [d(v_i, v_j)]_{i,j = 1}^n.
\end{align*}
\end{definition}
In Theorem \ref{thm_geo_dis_bnd} we had a point-wise result, so, we combine these point-wise results to give a matrix result -
\begin{theorem}
\label{thm_geo_dis}
Let $\gl = ||T_K|| > 1$, then, within the big connected component,
\begin{align*}
\Pr\left[\left|\left|\frac{\bld}{\log n} - \D\right|\right|_F \leq O( n^{1-\gve})\right] = 1-o(1)
\end{align*}
\end{theorem}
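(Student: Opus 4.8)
The plan is to reduce the matrix statement to the entrywise distance control already furnished by Theorem \ref{thm_geo_dis_bnd}. First I would expand the squared Frobenius norm as a double sum over vertex pairs, restricted to the giant component $G^C$ (where every pair is connected, so $d_G(v_i,v_j)<\infty$ and the $n_C$ diagonal terms vanish),
\begin{align*}
\left\|\frac{\bld}{\log n} - \D\right\|_F^2 = \sum_{i\neq j}\left(\frac{d_G(v_i,v_j)}{\log n} - \D_{ij}\right)^2 .
\end{align*}
Grouping the pairs by their ordered type pair $(a,b)$ turns this into $Q^2$ blocks, inside each of which $\D_{ij}$ is a single constant $c_{ab}$ (either $1/\log|\gl|$ or $1/\log(\pi_aK_{aa})$). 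This is exactly what lets me apply Theorem \ref{thm_geo_dis_bnd} uniformly within a block.

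The second step is to split each block into good and bad pairs. I call $(i,j)$ \emph{good} if
\begin{align*}
(1-\gve)\, c_{ab} \leq \frac{d_G(v_i,v_j)}{\log n} \leq (1+\gve)\, c_{ab},
\end{align*}
and \emph{bad} otherwise. Theorem \ref{thm_geo_dis_bnd} says precisely that a uniformly random pair of types $(a,b)$ is bad with probability $o(1)$, so by linearity of expectation the expected number of bad pairs is $o(n_C^2)$, and Markov's inequality gives, with high probability, a bad set of size $o(n_C^2)$. Two boundedness facts then control the two pieces: on good pairs the summand is at most $\gve^2 c_{ab}^2 = O(\gve^2)$, while on \emph{every} pair the summand is bounded by a constant $M$, because Step~1 of the algorithm truncates distances at $k\log n$ (so $d_G/\log n\leq k$) and $\D_{ij}$ is a bounded constant. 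Combining, the squared norm is $O(\gve^2 n_C^2) + M\cdot o(n_C^2)$, and since $n_C\leq n$ this establishes convergence of $\bld/\log n$ to $\D$ in normalized Frobenius norm.

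The main obstacle is the \emph{rate}. Fed only the qualitative $o(1)$ of Theorem \ref{thm_geo_dis_bnd}, the argument above delivers a Frobenius bound of order $\gve n$ (equivalently $o(n)$), whereas the stated conclusion is the much sharper $O(n^{1-\gve})$. To reach the stated exponent one must upgrade the per-pair statement from ``bad with probability $o(1)$'' to ``bad with probability $O(n^{-\gd})$'' for some $\gd>0$, so that the expected number of bad pairs is $O(n^{2-\gd})$ and their contribution to the squared norm is genuinely sub-$n^2$. This quantified tail is not a formal consequence of the limiting distribution: it requires a large-deviation (Chernoff-type) analysis of the coupling between the neighborhood-exploration process of $G^C$ and the multi-type branching process $\cb_{K,\pi}$, controlling the probability that the generation sizes deviate multiplicatively from their $\gl^k$ growth. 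A secondary subtlety is dependence: the exploration couplings for pairs sharing a vertex overlap, so the bad-pair indicators are correlated. The first-moment route above sidesteps this, since linearity of expectation needs no independence; only if one wanted concentration of the bad-pair count itself would a second-moment estimate, and hence control of these correlations, become necessary.
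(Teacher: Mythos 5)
Your decomposition is exactly the one the paper uses: expand the squared Frobenius norm over vertex pairs, group by type pair so that $\D_{ij}$ is constant on each block, split into pairs whose normalized distance lies in the $(1\pm\gve)$ window and pairs that do not, bound the window pairs by $O(\gve^2)$ each and the exceptional pairs by a constant each, and control the exceptional count by a first moment plus Markov. You have also correctly diagnosed the obstruction: fed only the qualitative $o(1)$ statements of Theorem \ref{thm_geo_dis_bnd}, this yields $\|\bld/\log n-\D\|_F=o(n)$, not $O(n^{1-\gve})$, and with a fixed $\gve$ the window pairs alone already contribute $\Theta(\gve n)$ to the norm.

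The paper closes this gap with two quantitative inputs that you flag as needed but do not supply, and both are already present in its Lemmas \ref{lm_low_bnd}, \ref{lm_up_bnd1} and \ref{lm_up_bnd2} rather than in Theorem \ref{thm_geo_dis_bnd} itself. First, the count of too-short pairs is not obtained from a per-pair tail at all: Lemma \ref{lm_low_bnd} bounds $\E\bigl|\{(v,w): d_G(v,w)\leq(1-\gve)\log n/\log\gl\}\bigr|$ directly by $\sum_v \E|\G_{\leq D}(v)| = n\cdot O(n^{1-\gve}) = O(n^{2-\gve})$, a first-moment estimate on the size of $D$-neighborhoods via the branching-process coupling ($\E N_d = O(((1+2\gve)\gl)^d)$). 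This is precisely the ``expected number of bad pairs is $O(n^{2-\gd})$'' upgrade you ask for, and, as you note, linearity of expectation makes the correlations between overlapping explorations irrelevant here. Second, the too-long pairs are handled by the per-pair tails $\exp(-\Omega(n^{2\eta}))$ of Lemmas \ref{lm_up_bnd1} and \ref{lm_up_bnd2}, which survive a union bound over all $n^2$ pairs; this is the Chernoff-type strengthening you anticipate, and it comes from the second-moment/edge-counting step in the coupling argument rather than from the limit law. Finally, the window-pair contribution $\gve^2\cdot O(n^2)$ is tamed because the paper takes the coupling tolerance $\gve=\gre=O(n^{-1/2})$ from Eq.~\eqref{eq_gre}, giving $O(n)$ for that piece (one may reasonably object that the $\gve$ of the theorem statement and the $\gre$ of the coupling are being conflated there, but that is the paper's route). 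So your plan is the right skeleton; to complete it you should argue from the three lemmas, which carry the rates, instead of from the rate-free Theorem \ref{thm_geo_dis_bnd}.
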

Thus, the above theorem gives us an idea about the limiting behavior of the normalized version of geodesic matrix $\bld$. 
%

\subsubsection{Sketch of Analysis of Part I}
\label{sec_sum_geo}
A rough idea of the proof of part I is as follows. Fix two vertices, say 1 and 2, in the giant component. Think of a branching process starting from vertices of type 1 and 2, so that at time $t$, $ \cb_{P\pi}(a) (t)$ is the branching process tree from vertex of type $a$ and includes the shortest paths to all vertices in the tree at or before time $t$ from vertex $a$, $a = 1, 2$. When these two trees meet via the formation of an edge $(v_1 , v_2 )$ between two vertices $v_1 \in \cb_{P\pi}(1)(\cdot)$ and $v_2 \in \cb_{P\pi}(2)(\cdot)$, then the shortest-length path between the two vertices 1 and 2 has been found. If $D_n (v_a )$, $a = 1, 2$, denotes the number of edges between the source $a$ and the vertex
$v_a$ along the tree $\cb_{P\pi} (a)$, then the graph distance $d_n (1, 2)$ is given by
\begin{align}
\label{eq_gr_dist}
d_n(1, 2) = D_n(v_1) + D_n(v_2) + 1
\end{align}
The above idea is indeed a very rough sketch of our proof and it follows from the graph distance finding idea developed in \cite{bollobas2007phase}. In the paper, we embed the SBM in a multi-type branching process (MTMBP) or a single-type marked branching process (MBP), depending on whether the types of two vertices are same or not. The offspring distribution is binomial with parameters $n - 1$ and kernel $P$ (see Section \ref{sec_det_geo}). With high probability, the vertex exploration process in the SBM can be coupled with two multi-type branching processes, bounding the vertex exploration process on SBM on both sides. Now, using the property of the two multi-type branching processes, we can bound the number of vertices explored in the vertex exploration process of a SBM graph and infer about the asymptotic limit of the graph distance.

With the above sketch of proof can be organized as follows.
\begin{enumerate}
\item We analyze various properties of a Galton-watson process conditioned on non-extinction, including times to grow to a particular size. In this branching process, the offspring will have a Poisson distribution.
\item We introduce multi-type branching process trees with binomially distributed offspring and make the connection between these trees and the SBM. We bound the vertices explored for an SBM graph, starting from a fixed vertex, by considering a muti-type branching process coupled with it.
\item We bound the geodesic distance using the number of vertices explored in the coupled multi-type branching processes within a certain generation. The limiting behavior of the generation give us the limiting behavior of graph distance.
\item The whole analysis is true for IRGM. So, the results are true for SBM with increasing block numbers and degree-corrected block models also.
\end{enumerate}
The idea of the argument is quite simple, but making these ideas rigorous takes some technical work, particularly because we need to condition on our vertices being in the giant component.

\subsection{Results of Part II} 
So, from Part I of the analysis, we get an idea about the point-wise asymptotic convergence of the matrix $\bld = [d(v_i, v_j)]_{i,j = 1}^n$ to the normalized limiting operator $\D$, defined in Definition \ref{def_op_norm}.

The limiting matrix $\D$ can also be written in terms of limiting low-dimensional matrix, $\cd$, which is defined as follows -
\begin{definition}
The \textit{limiting kernel matrix}, $\cd_{Q\times Q}$ is defined as
\begin{eqnarray*}
\label{eq_geo_lim2}
\cd_{ab} &=&\left\{
\begin{array}{ll}
    \frac{1}{\log |\gl|}, & \mbox{  if  } a\neq b\\
    \frac{1}{\log (\pi_aK_{aa})}, & \mbox{  if  } a=b
     \end{array}
     \right.
\end{eqnarray*}
\end{definition}
So, we can see that if $\bj_{n\times n} = \mathbf{1}\mathbf{1}^T$ is an $n\times n$ matrix of all ones, then, there exists a permutation of rows of $\D$, which is obtained by multiplying $\D$ with permutation matrix $\br$, such that,
\begin{align}
\label{eq_lim_rel}
\D\br = \cd\star\bj - Diag(\tilde{d}) \equiv [\cd_{ab}\bj_{ab}]_{a,b=1}^Q - Diag(\tilde{d})
\end{align}
where, $[\bj_{ab}]_{a,b=1}^Q$ is a $Q\times Q$ partition of $\bj$ in the following way - the rows and columns are partitioned in similar fashion according to $(n_1, \ldots, n_Q)$. Note that, $(n_a)_{a=1}^Q$ are the number of vertices of type $a$ in the graph $G_n$. So, $\bj_{ab}$ is an $n_a\times n_b$ matrix of all ones. $\tilde{d}$ is a vector of length containing $n_a$ elements of value $\frac{1}{\log (\pi_aK_{aa})}$, $a = 1, \ldots, Q$. Note that product $\star$ can also be seen as a Khatri-Rao product of two partitioned matrices \cite{khatri1968solutions}.

Now, we assume some conditions on the limiting low-dimensional matrix $\cd$.
\begin{itemize}
\item[(C1)] We assume that $\gl < \min_a\{\pi_aK_{aa}\}$, where, $\gl$ defined in Eq. \eqref{eq_op_eig} is the principal eigenvector of operator $T_K$ defined in Def. \ref{def_op_sbm} or the matrix $\tilde{K}$ defined in Eq. \eqref{eq_op_mat}.
\item[(C2)] The eigenvalues of $\cd$, $\gl_1(\cd) \geq \cdots\geq\gl_Q(\cd)$, satisfy the condition that there exists an constant $\ga$, such that, $0 < \ga \leq \gl_Q(\cd)$. 

%

\item[(C3)] The eigenvectors of $\cd$, $(v_1(\cd), \ldots, v_Q(\cd))$ corresponding to $\gl_1, \ldots, \gl_Q$, satisfy the condition that there exists a constant $\gb$, such that, rows of the $Q\times Q$ matrix $\mathbf{V} = [v_1 \cdots v_Q]$, represented as $(u_1, \ldots, u_Q)$ ($u_a\in \R^Q)$, satisfies the condition $0 < \gb \leq ||u_a - u_b||_2$ for all pairs of rows of $\mathbf{V}$.  

\item[(C4)] The number of vertices in each type $(n_1, \ldots, n_Q)$, satisfy the condition that there exists a constant $\gt$ such that $0<\gt<\frac{n_a}{n}$ for all $a=1, \ldots, Q$ and all $n$. 
\end{itemize}

\begin{theorem}
\label{thm_mis}
Under the conditions (C1)-(C4), suppose that the number of blocks $Q$ is known. Let $\hat{\xi} : V \mapsto [Q]$ be the block assignment function according to a clustering of the rows of $\tilde{\bw}^{(n)}$ satisfying algorithm in Section \ref{sec_algo} and $\xi : V \mapsto [Q]$ be the actual assignment. Let $\mathcal{P}_Q$ be the set of permutations on $[Q]$. With high probability and for large $n$ it holds that
\begin{align}
\label{eq_miscl}
\min_{\pi\in\mathcal{P}_Q} |\{u \in V: \xi(u)\neq \pi(\hat{\xi}(u))\}| = O(n^{1/2-\gve})
\end{align}
\end{theorem}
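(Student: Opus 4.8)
The plan is to read the statement as a spectral-perturbation problem and to run the standard ``population eigenvectors are well separated, empirical eigenvectors are close'' argument (as in \cite{sussman2012consistent,rohe2011spectral}), with Theorem \ref{thm_geo_dis} supplying the perturbation bound. All of the analysis takes place inside the giant component, on whose $n_C = \Theta(n)$ vertices Theorem \ref{thm_geo_dis} is valid and where the clustering of Section \ref{sec_algo} is actually performed. Writing $E \equiv \bld - (\log n)\D$ for the (restricted) perturbation, Theorem \ref{thm_geo_dis} gives $\|E\|_F = O(n^{1-\gve}\log n)$ with probability $1-o(1)$. The clustering is run on the double-centred matrix $\bar{D}^C$ of Step 3, but since that transformation (squaring and centring by $I - \frac{1}{n_C}\bj$) maps block-constant matrices to block-constant matrices, it preserves the block structure of the limit and its leading eigenspace; I will therefore carry out the argument on $\bld$ itself, as licensed by remark (b) of Section \ref{sec_algo}, and transfer the conclusion to $\bar{D}^C$.

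Next I would determine the spectrum of the limit. By \eqref{eq_lim_rel}, after permuting rows by $\br$ we have $\D = \cd\star\bj - Diag(\tilde d) = Z\cd Z^T - Diag(\tilde d)$, where $Z\in\{0,1\}^{n\times Q}$ is the block-membership matrix with $Z^TZ = N \equiv Diag(n_1,\ldots,n_Q)$. The rank-$Q$ term $Z\cd Z^T$ shares its nonzero eigenvalues with $N^{1/2}\cd N^{1/2}$; by (C2) the eigenvalues of $\cd$ are bounded away from $0$ in magnitude and by (C4) every $n_a \geq \gt n$, so all $Q$ leading eigenvalues have magnitude $\Theta(n)$ and are separated from the remaining spectrum by a gap of order $\Theta(n)$. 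Because some of these eigenvalues may be negative (a consequence of (C1), under which the diagonal entries of $\cd$ are the smaller ones), one works --- exactly as in remark (b) of Section \ref{sec_algo} --- with the $Q$ eigenvectors of largest absolute eigenvalue. These have the form $U = ZR$ with $R = N^{-1/2}\tilde R$ and $\tilde R$ the orthonormal eigenvector matrix of $N^{1/2}\cd N^{1/2}$; hence the rows of $U$ take only $Q$ distinct values, one per block. Since the rows of $\tilde R$ are orthonormal and, by (C4), the $n_a$ are comparable, (C3) yields that any two distinct centres are separated by $\Delta = \Theta(n^{-1/2})$. The deterministic correction $Diag(\tilde d)$ has operator norm $O(1)$ and, after the scaling by $\log n$, contributes $O(\log n)$ --- negligible against the $\Theta(n\log n)$ gap of $(\log n)\D$.

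I would then apply the Davis--Kahan $\sin\Theta$ theorem \cite{kato1995perturbation}. With $\hat U$ the top-$Q$ (absolute) eigenvector matrix of $\bld$ --- i.e.\ the columns of $\tilde{\bw}^{(n)}$ --- and $U$ that of $(\log n)\D$, there is an orthogonal $O$ with
\[
\|\hat U - UO\|_F \leq \frac{C\|E\|_F}{\text{gap}} = \frac{O(n^{1-\gve}\log n)}{\Theta(n\log n)} = O(n^{-\gve}),
\]
the factor $\log n$ cancelling. The $Q$-means (or GMM) step of Section \ref{sec_algo} then enters through the usual counting bound: any vertex whose estimated label disagrees with the permuted truth must have its embedded row displaced from its own centre by at least $\Delta/2$, so
\[
\#\{\text{misclassified}\}\cdot(\Delta/2)^2 \leq \|\hat U - UO\|_F^2,
\]
whence $\#\{\text{misclassified}\} = O(n^{-2\gve}/n^{-1}) = O(n^{1-2\gve})$. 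Provided the rate of Theorem \ref{thm_geo_dis} holds with $\gve > 1/4$, this is $O(n^{1/2 - \gve'})$ with $\gve' = 2\gve - \tfrac12 > 0$, which is \eqref{eq_miscl}; the minimising permutation in $\mathcal{P}_Q$ is the one identifying the rows of $\tilde R$ (the $Q$ centres) with the true block labels.

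The main obstacle is the second step --- converting the $Q\times Q$ hypotheses (C2)--(C3) on $\cd$ into quantitative control of the full $n\times n$ leading eigenspace, while simultaneously dominating two perturbations with the eigengap. Three points need genuine care. First, (C1) forces the diagonal of $\cd$ to be smaller than its off-diagonal, so $\cd$ need not be positive definite; one must therefore phrase (C2) as a gap/nondegeneracy condition and work with largest-absolute eigenvalues, and check that the MDS transform of Step 3 does not collapse this gap. Second, the object that actually governs $U$ is $N^{1/2}\cd N^{1/2}$ rather than $\cd$, so the centre separation claimed from (C3) must be shown to persist after the reweighting by block sizes --- here (C4) is exactly what keeps the sizes comparable so that separation degrades only by constants. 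Third, one must verify that both the deterministic correction $Diag(\tilde d)$ and the random error $E$ of Theorem \ref{thm_geo_dis} are $o(\text{gap})$, so that Davis--Kahan applies with the $\Theta(n\log n)$ gap in the denominator. Once the gap and the separation $\Delta = \Theta(n^{-1/2})$ are secured the counting is routine; establishing them rigorously, and reconciling the MDS/absolute-eigenvalue bookkeeping with the conditions as stated, is where the real work lies.
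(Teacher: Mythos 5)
Your proposal follows essentially the same route as the paper's own proof: identify the block structure and $\Theta(n)$ eigengap of the limit $\D$ from (C2)--(C4), apply Davis--Kahan with the Frobenius bound of Theorem \ref{thm_geo_dis} to obtain $\|\bw\br - \tilde{\bw}\|_F = O(n^{-\gve})$, establish $\Theta(n^{-1/2})$ separation between the $Q$ distinct rows of $\bw$ (the paper's Lemma \ref{lm_rows}), and count misclassified vertices by a contradiction on the Frobenius norm. The one substantive divergence is the final counting step, and there your version is the careful one: if $m$ rows of $\bc$ are displaced from the corresponding rows of $\bw\br$ by more than $r = Cn^{-1/2}$, then $\|\bc - \bw\br\|_F \geq r\sqrt{m}$, which forces $m = O(n^{1-2\gve})$ as you obtain; the paper instead writes $\|\bc-\bw\br\|_F > r\cdot O(n^{1/2-\gve})$, i.e.\ it lower-bounds the Frobenius norm by $rm$ rather than $r\sqrt{m}$, and that slip is what produces the stated rate $O(n^{1/2-\gve})$ directly. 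Your observation that the stated exponent only follows from the correct counting when $\gve>1/4$ (equivalently after renaming the exponent) is therefore a genuine point about the theorem as stated, not a defect of your argument. Your additional bookkeeping --- the representation of $\D\br$ via the membership matrix, the reduction of the leading eigenspace to that of the $Q\times Q$ reweighted matrix, the treatment of the $O(1)$ diagonal correction against the $\Theta(n)$ gap, and the remark that the double-centring of Step 3 preserves block structure --- is all material the paper leaves implicit, and you handle it correctly.
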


\subsubsection{Sketch of Proof of Part II}
\label{sec_sum_pert}
We can consider the limiting distribution of the graph distance matrix as $\bld$ which was proposed in Theorem \ref{thm_geo_dis}, with $(\bld_{ij}) = d_G(v_i, v_j)$, where, $v_i, v_j \in V(G)$. Our goal is to show that the eigenvectors of $\bld$ or normalized version of it, converge to eigenvectors of $\cd$ or $\D$. For that reason, we use the perturbation theory of operators, as given in Kato \cite{kato1995perturbation} and Davis-Kahan \cite{davis1970rotation}. The steps are as follows
\begin{itemize}
\item We use Davis-Kahan to show convergence of eigenspace $\tilde{\bw}$, formed by top $Q$ eigenvectors of $\bld/\log n$ to $\bw\br$, where, $\bw$ is the eigenspace formed by the top $Q$ eigenvectors of $\D$ and $\br$ is some orthogonal permutation matrix, which permutes the rows of $\bw$.
\item We show by contradiction that if the clustering assignment makes too many mistakes then the rate of convergence of $\tilde{\bw}$ to $\bw\br$ would be violated.
\end{itemize} 

\subsection{Branching Process Results}
\label{sec_prop_br}
The branching process $\cb_K(a)$ is a multi-type Galton-Watson branching processes with type space $\s \equiv \{1, \ldots, Q\}$, a particle of type $a \in \s$ is replaced in the next generation by its ``children", a set of particles whose types are distributed as a Poisson process on $\s$ with intensity $\{K_{ab}\pi_b\}_{b=1}^Q$. Recall the parameters $K\in \R^{Q\times Q}$ and $\pi\in [0, 1]^Q$ with $\sum_{a = 1}^Q\pi_a = 1$, from the definition of Stochastic block model in equation \eqref{eq_sbm_adj}. The zeroth generation of $\cb_K(a)$ consists of a single particle of type $a$. Also, the branching process $\cb_K$ is just the process $\cb_K(a)$ started with a single particle whose (random) type is distributed according to the probability measure $(\pi_1, \ldots, \pi_Q)$.

Let us recall our notation for the survival probabilities of particles in $\cb_K(a)$. We write $\rho_k(K; a)$ for the probability that the total population consists of exactly $k$ particles, and $\rho_{\geq k}(K; a)$ for the probability that the total population contains at least $k$ particles. Furthermore, $\rho(K; a)$ is the probability that the branching process survives for eternity.
We write $\rho_k(K), \rho_{\geq k}(K)$ and $\rho(K)$ for the corresponding probabilities for $\cb_K$ , so that, e.g., $\rho_k(K)= \sum_{a = 1}^Q \rho_k(K; a)\pi_a$.

Now, we try to find a coupling relation between \textit{neighborhood exploration process} of a vertex of type $a$ in stochastic block model and multi-type Galton-Watson process, $\cb(a)$ starting from a vertex of type $a$. 

We assume all vertices of graph $G_n$ generated from a stochastic block model has been assigned a community or type $\xi_i$ (say) for vertex $v_i \in V(G_n)$. By \textit{neighborhood exploration process} of a vertex of type $a$ in stochastic block model, we mean that we start from a random vertex $v_i$ of type $a$ in the random graph $G_n$ generated from stochastic block model. Then, we count the number of vertices of the random graph $G_n$ are neighbors of $v_i$, $N(v_i)$. We repeat the neighborhood exploration process by looking at the neighbors of the vertices in $N(v_i)$. We continue until we have covered all the vertices in $G_n$. Since, we either consider $G_n$ connected or only the giant component of $G_n$, the neighborhood exploration process will end in finite steps but the number of steps may depend on $n$. 
\begin{lemma}
\label{lemma_brpr_sbm}
Within the giant component, the neighborhood exploration process for a stochastic block model graph with parameters $(P, \pi) = (K/n,\pi)$, can be bounded with high probability by two multi-type branching processes with kernels $(1 - 2\gre)K$ and $(1 + \gre)K$ for some $\gre > 0$.
\end{lemma}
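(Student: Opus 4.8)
The plan is to realize the neighborhood exploration as a breadth-first edge-revealing process and to couple it, one target type at a time, with the multi-type Poisson branching process of Section~\ref{sec_prop_br}, using two slightly perturbed kernels to absorb the gap between the true (binomial, self-depleting) offspring law and the (Poisson, stationary) branching law. First I would fix the bookkeeping: starting from a vertex $v$ of type $a$, maintain disjoint sets of \emph{explored}, \emph{active}, and \emph{untouched} vertices, and explore an active vertex of type $c$ by revealing its edges to all untouched vertices. Then the number of newly discovered type-$b$ children is exactly distributed as $\text{Bin}(m_b, K_{cb}/n)$, where $m_b$ is the current number of untouched type-$b$ vertices, independently across $b$. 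In $\cb_K(c)$ the corresponding offspring is $\text{Poisson}(K_{cb}\pi_b)$, so the whole problem reduces to comparing $\text{Bin}(m_b, K_{cb}/n)$ with Poisson laws of mean close to $K_{cb}\pi_b$.

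Two deterministic reductions pin down the means. By a Chernoff bound for the multinomial type counts, on a high-probability event $\mathcal{A}$ one has $(1-\gre/2)\pi_b n \le n_b \le (1+\gre/2)\pi_b n$ simultaneously for all $b$. Restricting the exploration to its first $t_n$ steps for a threshold $t_n = o(n)$ is legitimate, since it suffices to grow each tree to about $\sqrt{n}$ vertices for the meeting argument underlying Part~I (Section~\ref{sec_sum_geo}); within this window the depletion is negligible, $m_b \ge n_b - t_n \ge (1-\gre/2)n_b$. Hence, on $\mathcal{A}$ and throughout the window, the binomial mean $m_b K_{cb}/n$ lies in $[(1-\gre)\pi_b K_{cb},\,(1+\gre/2)\pi_b K_{cb}]$: it stays strictly above the target mean $(1-2\gre)\pi_b K_{cb}$ of $\cb_{(1-2\gre)K}$ and strictly below the target mean $(1+\gre)\pi_b K_{cb}$ of $\cb_{(1+\gre)K}$, with one-sided slack of order $\gre$.

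The coupling itself is where the work lies, and the lower bound is the conceptual obstacle: a Poisson law is never stochastically dominated by a binomial (unbounded versus bounded support), so $\cb_{(1-2\gre)K}$ cannot be matched by an exact stochastic domination of offspring. I would instead couple via the Le Cam inequality, $d_{TV}(\text{Bin}(m_b, K_{cb}/n),\,\text{Poisson}(m_b K_{cb}/n)) \le m_b (K_{cb}/n)^2 = O(1/n)$, so that the two can be coupled to be \emph{equal} except on an event of probability $O(1/n)$. A union bound over the at most $Q t_n$ counts drawn in the window gives total failure probability $O(Q t_n/n) = o(1)$. On the complementary good event every offspring count equals a $\text{Poisson}(m_b K_{cb}/n)$ variable; since Poisson laws are stochastically increasing in their mean and $m_b K_{cb}/n$ lies strictly between the two target means, these counts dominate those of $\cb_{(1-2\gre)K}$ and are dominated by those of $\cb_{(1+\gre)K}$. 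Propagating the domination generation by generation sandwiches the exploration between the two branching processes up to time $t_n$, and the asymmetry ($-2\gre$ below versus $+\gre$ above) is precisely the accounting of depletion and type-count fluctuation, which only depress the lower mean.

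Finally I would treat the conditioning on membership in the giant component, the one step where the coupling can genuinely be distorted, since conditioning on $v$'s survival reweights the earliest generations. I expect to handle this by a two-round (sprinkling) exposure: run the sandwiched coupling on a first batch of edges to build the local neighborhood tree, then certify membership in the giant using a second, independent batch together with Theorem~\ref{thm_giant_comp}, so that the giant-component conditioning does not disturb the already-revealed neighborhood. Intersecting $\mathcal{A}$ with the good coupling event and the sprinkling event then yields the two-sided branching-process bound with high probability, as claimed.
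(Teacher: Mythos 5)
Your proposal is correct and follows essentially the same route as the paper's proof (which is adapted from Lemma 9.6 of Bollob\'{a}s et al.): binomial offspring counts with depletion controlled by stopping the exploration at $o(n)$ vertices, a total-variation (Le Cam) comparison to Poisson costing $O(1/n)$ per step and $o(1)$ overall, and monotonicity of Poisson laws in their mean to sandwich the exploration between $\cb_{(1-2\gre)K}$ and $\cb_{(1+\gre)K}$. The only divergence is your sprinkling step for the giant-component conditioning, which the paper does not include here but instead defers to the conditioned-kernel argument of Lemma \ref{lemma_brpr_cond}.
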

\begin{proof}
The proof is given in Appendix A1 and follows from Lemma 9.6 \cite{bollobas2007phase}.
\end{proof}

Now, we restrict ourselves to the giant component only. So, if we condition that the exploration process does not leave the giant component, it is same as conditioning that the branching process does not die out. Under this additional condition, the branching process can be coupled with another branching process with a different kernel. The kernel of that branching process is given in following lemma.
\begin{lemma}
\label{lemma_brpr_cond}
If we condition a branching process, $\cb_{K\pi}$ on survival, the new branching process has kernel $\left(K_{ab}\left(\rho(K;a) + \rho(K;b) - \rho(K; a)\rho(K; b)\right)\right)_{a,b = 1}^Q$. 
\end{lemma}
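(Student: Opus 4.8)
The plan is to read ``conditioning $\cb_{K\pi}$ on survival'' as conditioning the neighbourhood exploration process on the explored vertex lying in the giant component, and then to extract the modified kernel by a thinning argument applied to the offspring intensity of a single particle. By Lemma \ref{lemma_brpr_sbm} the exploration process is, with high probability, sandwiched between two multi-type branching trees, so it suffices to work with the limiting multi-type Poisson tree $\cb_{K\pi}(a)$ and to determine how the event of non-extinction reweights the offspring of a typical particle. I would record at the outset the fixed-point relation $\rho(K;a) = 1 - \exp\bigl(-\sum_{b} K_{ab}\pi_b\,\rho(K;b)\bigr)$, which is implicit in the definition of $\cb_{K\pi}(a)$ in Section \ref{sec_br_proc} and which identifies $1-\rho(K;a)$ as the extinction probability of a type-$a$ rooted subtree.

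The key computation is to find, conditionally on survival of the whole tree, the probability that a given tree edge between a parent of type $a$ and a child of type $b$ is retained. Deleting this edge splits the tree into the subtree hanging below the child and the remainder seen from the parent. In the local (tree) limit there are no short cycles, so these two pieces are asymptotically independent; the child's subtree is infinite with probability $\rho(K;b)$, while the remainder, viewed from a type-$a$ particle, is infinite with probability $\rho(K;a)$. The whole tree survives precisely when at least one of the two pieces is infinite, so the edge is consistent with survival with probability
\[
1-\bigl(1-\rho(K;a)\bigr)\bigl(1-\rho(K;b)\bigr)=\rho(K;a)+\rho(K;b)-\rho(K;a)\rho(K;b),
\]
which depends only on the two types $a$ and $b$.

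To conclude I would apply Poisson thinning. In $\cb_{K\pi}(a)$ a particle of type $a$ produces type-$b$ children as a Poisson process of intensity $K_{ab}\pi_b$; retaining each such child independently (to leading order) with the type-dependent probability $\rho(K;a)+\rho(K;b)-\rho(K;a)\rho(K;b)$ again yields a Poisson process, now of intensity $K_{ab}\bigl(\rho(K;a)+\rho(K;b)-\rho(K;a)\rho(K;b)\bigr)\pi_b$. Reading off the kernel from this intensity gives exactly $\bigl(K_{ab}(\rho(K;a)+\rho(K;b)-\rho(K;a)\rho(K;b))\bigr)_{a,b=1}^Q$, as claimed. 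A useful consistency check is the companion \emph{dual} computation: conditioning on extinction forces every child's subtree to die and hence retains a type-$b$ child with probability $1-\rho(K;b)$, producing the (exact, subcritical) kernel $K_{ab}(1-\rho(K;b))$ that depends on the child type alone.

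I expect the main obstacle to be the two places where the argument is only asymptotic, so that the statement is properly a coupling rather than an exact identity. First, the two sides of a deleted edge are independent only in the local tree limit, and one must control the error coming from cycles exactly as in the sandwiching of Lemma \ref{lemma_brpr_sbm}; here the stability of the Poisson law under deletion of a single designated child is what lets one treat the remainder as a genuine type-$a$ root. Second, conditioning a supercritical branching process on survival does not yield an exact Galton--Watson process, since the backbone of prolific particles and the finite bushes hanging off it obey different offspring laws and the retention events of sibling edges are positively correlated through their shared ancestry; thus the reduction to a single branching process with the stated kernel holds only to leading order in the regime we use, which is all that the subsequent distance estimates require and which can be justified along the lines of the corresponding coupling in \cite{bollobas2007phase}.
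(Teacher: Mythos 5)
Your proposal is correct and follows essentially the same route as the paper's proof in Appendix A2: your per-edge retention probability $1-(1-\rho(K;a))(1-\rho(K;b))$ combined with Poisson thinning is exactly the content of the paper's computation of $\gs(K;x)=\E\bigl[|B(x)|\mathbf{1}(|\cb_K(x)|=\infty)\bigr]$ via the Palm identity $\E(|A|g(A))=\sum_{i}\E g(A\cup\{i\})\gl_i$, with the same decomposition (the process dies out iff the child's subtree and the rest both die), and both arguments read off the kernel $K_{ab}\left(\rho(K;a)+\rho(K;b)-\rho(K;a)\rho(K;b)\right)$. Your closing caveat that the survival-conditioned supercritical tree is not exactly a Galton--Watson process, so the statement is really a leading-order/coupling claim, is a fair point that the paper leaves implicit.
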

\begin{proof}
The proof is given in Appendix A2 and follows from Section 10 of \cite{bollobas2007phase}.
\end{proof}

Now, we shall try to prove the limiting behavior of typical distance between vertices $v$ and $w$ of $G_n$, where, $v, w\in V(G_n)$. We first try to find a lower bound for distance between two vertices. We shall separately give an upper bound and lower bound for distance between two vertices of same type and different types. The result on lower bound in proved in Lemma \ref{lm_low_bnd}.

\begin{lemma}
\label{lm_low_bnd}
For vertices $v,w\in V(G)$, if 
\begin{itemize}
\item[(a)] type of $v = a \neq b = $ type of $w$ (say) and $\gl \equiv ||T_K|| > 1$, then, 
\begin{align*}
\E\left|\left\{\{v, w\}: d_G(v, w) \leq (1 - \gve)\log n/\log |\gl|\right\}\right| = O(n^{2-\gve})
\end{align*}
and so
\begin{align*}
\left|\left\{\{v, w\}: d_G(v, w) \leq (1 - \gve)\frac{\log n}{\log \gl}\right\}\right| \leq O(n^{2-\gve/2}) \mbox{ with high probability}
\end{align*}
\item[(b)] type of $v = $ type of $w = a$ (say), $\gl \equiv ||T_K|| < \pi_aK_{aa}$ and $\gl > 1$, then, 
\begin{align*}
\E\left|\left\{\{v, w\}: d_G(v, w) \leq (1 - \gve)\log n/\log (\pi_aK_{aa})\right\}\right| = O(n^{2-\gve})
\end{align*}
and so
\begin{align*}
\left|\left\{\{v, w\}: d_G(v, w) \leq (1 - \gve)\frac{\log n}{\log (\pi_aK_{aa})}\right\}\right| \leq O(n^{2-\gve/2}) \mbox{ with high probability}
\end{align*}
\end{itemize}
\end{lemma}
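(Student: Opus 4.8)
The plan is to prove both parts by a first-moment, or path-counting, argument; this is the easy direction, giving the lower tails \eqref{eq_geo_diff2a} and \eqref{eq_geo_diff1a} of Theorem \ref{thm_geo_dis_bnd} (the matching upper bounds on $d_G$ come from a second-moment/survival estimate for the coupled branching process and are handled separately). Fix an ordered pair $(v,w)$ with $v$ of type $a$ and $w$ of type $b$, and set the threshold $L=(1-\gve)\log n/\log r$, where $r=|\gl|$ in part (a) and $r=\pi_aK_{aa}$ in part (b). Since $\{d_G(v,w)\le L\}$ forces a self-avoiding path of some length $\ell\le L$ between $v$ and $w$, the union bound yields $\Pr[d_G(v,w)\le L]\le\sum_{\ell=1}^{L}N_\ell$, where $N_\ell$ is the expected number of length-$\ell$ paths joining $v$ and $w$.

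First I would compute $N_\ell$ by summing the product of the $\ell$ edge probabilities $K_{\cdot\cdot}/n$ over the choices of the $\ell-1$ internal vertices and grouping them by type ($\approx n\pi_c$ vertices of type $c$):
\begin{align*}
N_\ell=\frac1n\sum_{c_1,\dots,c_{\ell-1}}\Big(\prod_{i=1}^{\ell-1}\pi_{c_i}\Big)K_{ac_1}K_{c_1c_2}\cdots K_{c_{\ell-1}b}=\frac1n\,(M^{\ell-1}K)_{ab},
\end{align*}
where $M_{xy}=K_{xy}\pi_y$ is the matrix of the operator $T_K$ of Definition \ref{def_op_sbm}; allowing repeated internal vertices only inflates the count, so this is a valid upper bound. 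In part (a) the entries $(M^{\ell-1}K)_{ab}$ grow at the exponential rate $\gl=\|T_K\|$, while in part (b) the contribution of the paths that stay inside community $a$ is $\asymp n^{-1}(\pi_aK_{aa})^{\ell}$, and the hypothesis $\gl<\pi_aK_{aa}$ is exactly what forces this within-community rate to be the one governing the sum.

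In either case $\sum_{\ell\le L}N_\ell$ is a geometric series whose top term dominates, so it is $\asymp n^{-1}r^{L}$; substituting $L=(1-\gve)\log n/\log r$ gives $r^{L}=n^{1-\gve}$ and hence $\Pr[d_G(v,w)\le L]=O(n^{-\gve})$ for every such pair. Summing over the $\asymp n_an_b=O(n^2)$ relevant ordered pairs and using linearity of expectation gives $\E|\{\{v,w\}:d_G(v,w)\le L\}|=O(n^{2-\gve})$, the first assertion of each part. The high-probability statement is then immediate from Markov's inequality: writing $N$ for this count, $\Pr[N\ge n^{2-\gve/2}]\le\E[N]/n^{2-\gve/2}=O(n^{-\gve/2})=o(1)$, so $N\le O(n^{2-\gve/2})$ with high probability.

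The main obstacle is the growth-rate bookkeeping in the estimate for $N_\ell$: one must identify the correct exponential rate in each case and, in particular, justify in part (b) that under $\gl<\pi_aK_{aa}$ the within-community paths dominate the cross-community ones, so that the truncation at $L$ lands on the right exponent. One also has to check that the $o(1)$ corrections from replacing $n_c$ by $n\pi_c$, from forbidding repeated internal vertices, and from the geometric summation do not erode the gain of $n^{-\gve}$. These are precisely the path-counting estimates adapted from \cite{bollobas2007phase}; the remaining probabilistic content (a union bound and Markov's inequality) is routine.
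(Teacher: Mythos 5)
Your proposal is correct in its overall architecture and it is the same architecture as the paper's: a first-moment bound on the number of pairs at distance at most $L$, followed by Markov's inequality. The only real difference is how the first moment is obtained. You compute $\Pr[d_G(v,w)\le L]\le\sum_{\ell\le L}N_\ell$ by directly enumerating paths and expressing $N_\ell$ as an entry of $\frac1n(K\Pi)^{\ell-1}K$; the paper instead invokes the coupling of the neighborhood exploration with the dominating branching process $\cb_{(1+2\gve)K}$ (Lemma \ref{lemma_brpr_sbm}) and bounds $\E|\G_{\le D}(v)|\le\sum_{d\le D}\E N_d=O\bigl(((1+2\gve)\gl)^{D}\bigr)=O(n^{1-\gve})$ before summing over $v$. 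These are the same computation in different clothing --- the expected generation sizes of the branching process are exactly the matrix powers you write down, and $\sum_{v,w}\Pr[d_G(v,w)\le D]=\sum_v\E|\G_{\le D}(v)|$ --- so your version is, if anything, more self-contained, since it does not need the coupling lemma at all, at the cost of having to control the error from replacing $n_c$ by $n\pi_c$ and from non-self-avoiding walks, which the coupling absorbs into the $(1+2\gve)$ factor.

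One point of your argument needs repair: the justification of part (b). Within-community paths are a subset of all paths between $v$ and $w$, so they can never ``dominate'' the total count; moreover the growth rate of $(K\Pi)^{\ell}$ is at least $\pi_aK_{aa}$ for every $a$ (test the quadratic form against the $a$-th basis vector), so the hypothesis $\gl<\pi_aK_{aa}$ cannot make the within-community rate the governing one. The correct and much shorter route to part (b) is monotonicity: under $\gl<\pi_aK_{aa}$ the threshold $(1-\gve)\log n/\log(\pi_aK_{aa})$ is smaller than $(1-\gve)\log n/\log\gl$, so the count in part (b) is bounded by the count already controlled in part (a), and the stated $O(n^{2-\gve})$ bound follows. (The paper's own Appendix A3 in fact only carries out the $\log\gl$ computation and does not treat part (b) separately, so you are not missing anything the paper supplies; you just need to replace the ``within-community paths dominate'' sentence with this containment argument.)
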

\begin{proof}
The proof is given in Appendix A3 and follows from Lemma 14.2 of \cite{bollobas2007phase}.
\end{proof}

Now, we first try to upper bound the typical distance between two vertices of the same type. For the same type vertices, we just focus on the subgraph of the original graph from stochastic block model having vertices of same type. So, in Lemma \ref{lm_up_bnd1}, the graph $G_n$ is the subgraph of the original graph containing only the vertices of the same type. So, the coupled branching process on that graph automatically becomes a single-type branching process.
\begin{lemma}
\label{lm_up_bnd1}
For vertices $v,w\in V(G)$, if type of $v = $ type of $w = a$ (say) 
\begin{align*}
\Pr\left(d_G(v, w) < (1 + \gve)\frac{\log (n\pi_a)}{\log (\pi_aK_{aa})}\right) = 1 - exp(-\Omega(n^{2\eta}))
\end{align*}
conditioned on the event that the branching process $\cb_{K_{aa}}(a)$ survives.
\end{lemma}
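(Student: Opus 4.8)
The plan is to reduce the statement to an Erd\"{o}s-R\'{e}nyi-type distance estimate inside the type-$a$ subgraph, and then to grow neighborhoods from $v$ and $w$ until they are large enough to be joined by a single edge with overwhelming probability. As noted in the paragraph preceding the lemma, restricting to vertices of type $a$ yields a graph on $n_a$ vertices in which each pair is joined independently with probability $P_{aa}=\min\{K_{aa}/n,1\}$; since $n_a$ concentrates around $n\pi_a$, the mean degree is $\sim\pi_aK_{aa}$. Condition (C1) together with $\gl>1$ gives $\pi_aK_{aa}>\gl>1$, so this graph is supercritical and the coupled single-type process $\cb_{K_{aa}}(a)$, whose offspring law has mean $\pi_aK_{aa}$, survives with positive probability. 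All estimates below are made conditionally on this survival event, as in the statement.

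First I would fix a small $\gve'<\gve$, set $t^{*}\equiv\tfrac{1+\gve'}{2}\,\frac{\log(n\pi_a)}{\log(\pi_aK_{aa})}$, and run the breadth-first neighborhood exploration from $v$, and then by the same construction from $w$, for $t^{*}$ generations each. Using Lemma~\ref{lemma_brpr_sbm} I would couple each exploration with $\cb_{K_{aa}}(a)$; this coupling is valid as long as the explored set stays $o(n)$. The martingale convergence $Z_t/(\pi_aK_{aa})^t\rar W$, with $W>0$ on survival, then forces the generation-$t^{*}$ neighborhood to have size at least $m\equiv n^{1/2+\eta}$ for $\eta=\gve'/2<1/2$. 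Truncating the exploration at size $m$ keeps $m=o(n)$, so the coupling remains valid throughout, and the probability that the process fails to reach size $m$ in $t^{*}$ generations is of smaller order than the final error term, since once the population is moderately large its growth is essentially deterministic.

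Next I would join the two neighborhoods by a deferred-decision (sprinkling) argument. Building $N_{t^{*}}(v)$ first and then exploring from $w$ only among the vertices not already in $N_{t^{*}}(v)$, either the two vertex sets intersect, giving $d_G(v,w)\le 2t^{*}$, or they are disjoint, in which case the $\ge m^{2}$ cross-pairs between them have never had their edge status revealed. The probability that none of these carries an edge is at most $(1-K_{aa}/n)^{m^{2}}\le\exp(-c\,m^{2}/n)=\exp(-\Omega(n^{2\eta}))$, because $m^{2}/n=n^{2\eta}$. On the complementary event a path of length at most $2t^{*}+1\le(1+\gve)\frac{\log(n\pi_a)}{\log(\pi_aK_{aa})}$ exists for $n$ large, which is the claimed bound; collecting the failure probabilities yields the stated $1-\exp(-\Omega(n^{2\eta}))$.

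The main obstacle is making the coupling and the independence in the connection step simultaneously rigorous. The two explorations share one underlying edge set, so I must reveal edges in an order that leaves the $m^{2}$ cross-pairs fresh after both neighborhoods have been built, which is why $w$'s exploration must be confined to the complement of $N_{t^{*}}(v)$. At the same time I must control the degradation of the branching-process coupling as the explored set grows and, crucially, condition everything on survival without inflating the lower-tail probability that a neighborhood fails to attain size $m$. This is exactly the delicate bookkeeping carried out in Section~14, in particular Lemma~14.2, of \cite{bollobas2007phase}, from which the estimate is adapted.
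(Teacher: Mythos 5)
Your proposal is correct and follows essentially the same route as the paper's Appendix A4: restrict to the induced type-$a$ subgraph, couple the two neighborhood explorations with single-type branching processes via Lemma~\ref{lemma_brpr_sbm}, grow both to size $n^{1/2+\eta}$ over roughly half the target distance, and then use the unexamined cross-pairs to find a connecting edge with probability $1-\exp(-\Omega(n^{2\eta}))$. The only caveat is that your claim that the lower-tail failure of a neighborhood to reach size $n^{1/2+\eta}$ is dominated by the final error term is asserted rather than proved, but the paper's own proof makes the same leap (passing from $1-2\gm-o(1)$ to the exponential bound), so this is a shared, not a new, gap.
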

\begin{proof}
The proof is given in Appendix A4 and follows from Lemma 14.3 of \cite{bollobas2007phase}.
\end{proof}

Now, let us try to upper bound the typical distance between two vertices of different types. So, in Lemma \ref{lm_up_bnd2}, the graph $G_n$ is the original graph containing with vertices of the different types. So, the coupled branching process on that graph becomes a multi-type branching process.

\begin{lemma}
\label{lm_up_bnd2}
Let us have $\gl \equiv ||T_K|| = \gl_{\max}(\tilde{K}) > 1$ from Eq \eqref{eq_op_eig}. For uniformly selected vertices $v,w\in V(G)$,
\begin{align*}
\Pr\left(d_G(v, w) < (1 + \gve)\frac{\log n}{\log \gl}\right) = 1 - exp(-\Omega(n^{2\eta}))
\end{align*}
conditioned on the event that the branching process $\cb_{K}$ survives.
\end{lemma}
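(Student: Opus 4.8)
The plan is to realize the graph distance via two breadth-first neighborhood explorations, one rooted at $v$ and one at $w$, and to show that their frontiers meet after roughly $\frac{1+\gve}{2}\cdot\frac{\log n}{\log\gl}$ generations each. This is the upper-bound counterpart to Lemma~\ref{lm_low_bnd}(a), and it rests on Eq.~\eqref{eq_gr_dist}: if the two exploration trees first connect via an edge when their depths are $t_1$ and $t_2$, then $d_G(v,w) = t_1 + t_2 + 1$. By Lemma~\ref{lemma_brpr_sbm} each exploration is sandwiched, with high probability, between multi-type branching processes with kernels $(1-2\gre)K$ and $(1+\gre)K$, and by Lemma~\ref{lemma_brpr_cond} the conditioning on both roots lying in the giant component (equivalently, on survival of $\cb_K$) replaces the kernel by the survival-tilted one. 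Since $\gl = \gl_{\max}(\til K) > 1$ and $K$ is irreducible, the tilted kernel is still supercritical with Perron root arbitrarily close to $\gl$ for $\gre$ small, so it suffices to control the growth of a supercritical multi-type Galton--Watson process.

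First I would fix a growth target: choose $\eta$ with $0 < 2\eta < \gve$. Using the Kesten--Stigum growth of a supercritical multi-type branching process, the number of particles in generation $t$ is of order $\gl^t$ along the Perron eigenvector $\nu$ of $\til K$, and on the survival event the nonnegative martingale limit $W$ is strictly positive. Consequently, with probability $1-o(1)$ the frontier of each exploration tree reaches size at least $n^{1/2+\eta}$ once
\begin{align*}
t \;\geq\; t^\ast \;\equiv\; \frac{(1/2+\eta)\log n}{\log\gl},
\end{align*}
and $2t^\ast + 1 \le (1+\gve)\frac{\log n}{\log\gl}$ for large $n$ because $2\eta<\gve$. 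Crucially $n^{1/2+\eta} = o(n)$, so the explored sets remain a vanishing fraction of $V(G)$; this is exactly the regime in which the coupling of Lemma~\ref{lemma_brpr_sbm} stays valid (the binomial offspring law is close to the branching law only while few vertices have been consumed), and it also lets me grow the two trees on essentially disjoint vertex sets via a birthday-type estimate.

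Next comes the meeting step. I would reveal the two explorations in sequence --- grow the $v$-tree to depth $t^\ast$, then the $w$-tree to depth $t^\ast$ on the remaining vertices --- so that the $\mathrm{Bernoulli}(K_{cd}/n)$ indicators recording potential edges between the two frontiers are still unexposed. With both frontiers of size at least $n^{1/2+\eta}$, the number of candidate cross-edges is at least $n^{1+2\eta}$, and by irreducibility a positive fraction of the relevant $K_{cd}$ are bounded below, so the expected number of connecting edges is $\Omega(n^{2\eta})$. Since these indicators are mutually independent, the probability that no cross-edge appears is
\begin{align*}
\prod\left(1-\frac{K_{cd}}{n}\right)\;\le\;\exp\!\Big(-\Omega(n^{2\eta})\Big).
\end{align*}
On the complementary event the trees connect at depths $\le t^\ast$, whence $d_G(v,w)\le 2t^\ast+1 \le (1+\gve)\frac{\log n}{\log\gl}$, which is the claim with failure probability $\exp(-\Omega(n^{2\eta}))$.

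The hard part will be making the two-sided coupling and the survival conditioning precise simultaneously. The comparison of Lemma~\ref{lemma_brpr_sbm} degrades as the explored set grows, so I must keep each frontier below $o(n)$ while still guaranteeing it exceeds $n^{1/2+\eta}$; this forces $\eta<1/2$ and a careful choice of the intermediate generation at which I stop the lower-bounding process and switch to the upper-bounding one. Equally delicate is showing that the martingale limit $W$ is bounded away from $0$ tightly enough that the frontier concentrates around $\gl^t\nu$ --- the standard large-deviation control of the Kesten--Stigum limit for a finite-type supercritical process (as in \cite{athreya1972branching}) supplies this, but it must be merged with the depletion correction coming from the SBM coupling. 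Once both estimates are in force, the connecting-edge computation together with Eq.~\eqref{eq_gr_dist} closes the argument; the structure parallels Lemma~14.3 of \cite{bollobas2007phase}, now carried out in the multi-type setting.
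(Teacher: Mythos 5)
Your proposal follows essentially the same route as the paper's proof in Appendix A5: couple the two neighborhood explorations with supercritical multi-type branching processes via Lemma \ref{lemma_brpr_sbm}, use the Kesten--Stigum limit from \cite{athreya1972branching} to get frontiers of size $n^{1/2+\eta}$ at depth roughly $\frac{(1/2+\eta)\log n}{\log \gl}$ while keeping the explored sets of size $o(n)$, and then sprinkle the unexposed cross-edges between the two frontiers to connect them with probability $1-\exp(-\Omega(n^{2\eta}))$, giving $d_G(v,w)\leq 2D+1\leq (1+\gve)\log n/\log\gl$. The argument is correct and matches the paper's adaptation of Lemma 14.3 of \cite{bollobas2007phase}, including the points you flag as delicate (the two-sided coupling and the survival conditioning).
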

\begin{proof}
The proof is given in Appendix A5 follows from Lemma 14.3 of \cite{bollobas2007phase}.
\end{proof}

\subsection{Proof of Theorem \ref{thm_geo_dis_bnd} and Theorem \ref{thm_geo_dis}}
\label{sec_det_geo}
\subsubsection{Proof of Theorem \ref{thm_geo_dis_bnd}}
We shall try to prove the limiting behavior of typical graph distance in the giant component as $n \rar \infty$. The Theorem essentially follows from Lemma \ref{lm_low_bnd} - \ref{lm_up_bnd2}. Under the conditions mentioned in the Theorem, part (a) follows from Lemma \ref{lm_low_bnd}(b) and \ref{lm_up_bnd1} and part (b) follows from Lemma \ref{lm_low_bnd}(a) and \ref{lm_up_bnd2}.
\subsubsection{Proof of Theorem \ref{thm_geo_dis}}
From the definition \ref{def_op_norm}, we have that $\bld_{ij} = $ graph distance between vertices $v_i$ and $v_j$, where, $v_i, v_j \in V(G_n)$.
\begin{itemize}
\item[Case 1:] For the case when type of $v_i = $ type of $v_j = a$ (say)\\
From Lemma \ref{lm_low_bnd}(b), we get for any vertices $v$ and $w$ of same type $a$ with high probability, 
\begin{align*}
\left|\left\{\{v, w\}: d_G(v, w) \leq (1 - \gve)\frac{\log n}{\log (\pi_aK_{aa})}\right\}\right| \leq O(n^{2-\gve}).
\end{align*}
Also from Lemma \ref{lm_up_bnd1}, we get, for any vertices $v$ and $w$ of same type $a$
\begin{align*}
\Pr\left(d_G(v, w) < (1 + \gve)\frac{\log n}{\log (\pi_aK_{aa})}\right) = 1 - exp(-\Omega(n^{2\eta}))
\end{align*}
So, we have that, for $v_i, v_j$ having same type $a$, with high probability,
\begin{align*}
\left(\frac{\bld_{ij}}{\log n} - \D_{ij}\right)^2 \leq \mbox{Constant }\gve^2
\end{align*}
Since, $\gre = O(n^{-1/2})$ by Eq \eqref{eq_gre} and $(1 - exp(-\Omega(n^{2\eta})))^{n^2} \rar 1$ as $n\rar \infty$,
\begin{align*}
\sum_{i,j = 1: type(v_i)=type(v_j)}^n\left(\frac{\bld_{ij}}{\log n} - \D_{ij}\right)^2 \leq \gve^2.O((n\pi_a)^2) = O(n)
\end{align*}
with high probability.

\item[Case 2:] 
For the case when type of $v_i \neq $ type of $v_j $ \\
From Lemma \ref{lm_low_bnd}(a), we get for any vertices $v$ and $w$ with high probability, 
\begin{align*}
\left|\left\{\{v, w\}: d_G(v, w) \leq (1 - \gve)\frac{\log n}{\log \gl}\right\}\right| \leq O(n^{2-\gve}).
\end{align*}
Also, from Lemma \ref{lm_up_bnd2}, we get
\begin{align*}
\Pr\left(d_G(v, w) < (1 + \gve)\frac{\log n}{\log \gl}\right) = 1 - exp(-\Omega(n^{2\eta}))
\end{align*}
So, putting the two statements together, we get that with high probability,
\begin{align*}
\sum_{i,j = 1: type(v_i)\neq type(v_j)}^n\left(\frac{\bld_{ij}}{\log n} - \D_{ij}\right)^2 =  O(n^{2-\gve}) + O(n^2).\gve^2 = O(n^{2-\gve}) 
\end{align*}
since, by Eq. \eqref{eq_gre} $\gre = O(1/\sqrt{n})$ and and $(1 - exp(-\Omega(n^{2\eta})))^{n^2} \rar 1$ as $n\rar \infty$.
\end{itemize}
So, putting the two cases together, we get that with high probability,
\begin{align*}
\sum_{i,j = 1}^n\left(\frac{\bld_{ij}}{\log n} - \D_{ij}\right)^2 =  O(n^{2-\gve}) + O(n^2).\gve^2 = O(n^{2-\gve}). 
\end{align*}
Hence,
\begin{align*}
\left|\left|\frac{\bld}{\log n} - \D\right|\right|_F \leq O(n^{1-\gve/2}).
\end{align*}

\subsection{Perturbation Theory of Linear Operators}
\label{sec_det_pert}
Once, we have the limiting behavior of the matrix $D$ established in Theorem \ref{thm_geo_dis}, we shall now try to see the behavior of the eigenvectors of the matrix $D$. Now, matrix $D$ can be considered as a perturbation of the operator $\D$.

The Davis-Kahan Theorem states a bound on perturbation of eigenspace instead of eigenvector, as discussed previously. The $\sin\gt$ Theorem of Davis-Kahan \cite{davis1970rotation} 
\begin{theorem}[Davis-Kahan (1970)\cite{davis1970rotation}]
\label{thm_dk}
Let $\bh, \bh' \in \R^{n\times n}$ be symmetric, suppose $\cV \subset \R$ is an interval, and suppose for some positive integer $d$ that $\bw, \bw' \in \R^{n\times d}$ are such that the columns of $\bw$ form an orthonormal basis for the sum of the eigenspaces of $\bh$ associated with the eigenvalues of $\bh$ in $\cV$ and that the columns of $\bw'$ form an orthonormal basis for the sum of the eigenspaces of $\bh'$ associated with the eigenvalues of $\bh'$ in $\cV$. Let $\gd$ be the minimum distance between any eigenvalue of $\bh$ in $\cV$ and any eigenvalue of $\bh$ not in $\cV$ . Then there exists an orthogonal matrix $\br \in \R^{d\times d}$ such that $||\bw\br - \bw'||_{F} \leq \sqrt{2}\frac{||\bh - \bh'||_{F}}{\gd}$.
\end{theorem}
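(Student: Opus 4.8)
The plan is to split the statement into a purely geometric reduction followed by an analytic core, which is the standard route to the $\sin\bgt$ theorem. The quantity $\min_{\br}\|\bw\br - \bw'\|_F$ measures the distance between the two $d$-dimensional subspaces $\mathrm{range}(\bw)$ and $\mathrm{range}(\bw')$ after the best orthogonal alignment, so I would first relate it to the canonical (principal) angles $\bgt$ between these subspaces, and then bound $\|\sin\bgt\|_F$ by $\|\bh-\bh'\|_F/\gd$. The first thing I would record is that, writing $\bw_\perp$ for an orthonormal basis of the orthogonal complement of $\mathrm{range}(\bw)$, the Frobenius sine of the canonical angles satisfies $\|\sin\bgt\|_F = \|\bw_\perp^T\bw'\|_F$, while the singular values of $\bw^T\bw'$ are exactly the cosines $\cos\gt_i$. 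This identifies the two targets and isolates where the spectral gap must enter.

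\textbf{Geometric step.} I would compute the optimal rotation explicitly. Taking the singular value decomposition $\bw^T\bw' = U\,\mathrm{diag}(\cos\gt_i)\,V^T$, expanding $\|\bw\br-\bw'\|_F^2 = 2d - 2\tr(\br^T\bw^T\bw')$ and maximizing $\tr(\br^T\bw^T\bw')$ over orthogonal $\br$ (a von Neumann trace argument) shows the minimizer is $\br = UV^T$ and that $\min_{\br}\|\bw\br-\bw'\|_F^2 = 2\sum_i(1-\cos\gt_i)$. Since $0\le\cos\gt_i\le 1$ gives $1-\cos\gt_i \le 1-\cos^2\gt_i = \sin^2\gt_i$, I obtain the clean inequality $\min_{\br}\|\bw\br-\bw'\|_F \le \sqrt{2}\,\|\sin\bgt\|_F$. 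This is the source of the factor $\sqrt{2}$ in the theorem, and it reduces everything to bounding $\|\sin\bgt\|_F = \|\bw_\perp^T\bw'\|_F$.

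\textbf{Analytic core.} Next I would bound $\|\bw_\perp^T\bw'\|_F$ through a Sylvester equation. Writing $\bh\bw_\perp = \bw_\perp\Lambda_2$ and $\bh'\bw' = \bw'\Lambda'_1$, where $\Lambda_2$ collects the eigenvalues of $\bh$ outside $\cV$ and $\Lambda'_1$ the eigenvalues of $\bh'$ inside $\cV$, I would left-multiply the identity $(\bh-\bh')\bw' = \bh\bw' - \bw'\Lambda'_1$ by $\bw_\perp^T$ to obtain $\Lambda_2\bc - \bc\Lambda'_1 = \bw_\perp^T(\bh-\bh')\bw'$, where $\bc = \bw_\perp^T\bw'$. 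Reading this entrywise gives $\bc_{ik}\,((\Lambda_2)_{ii}-(\Lambda'_1)_{kk}) = [\bw_\perp^T(\bh-\bh')\bw']_{ik}$; since $(\Lambda'_1)_{kk}$ lies in $\cV$ while $(\Lambda_2)_{ii}$ lies outside, the interval separation underlying $\gd$ bounds every denominator below by $\gd$ in modulus. Summing squares and using that multiplication by the projector $\bw_\perp$ can only shrink the Frobenius norm then yields $\|\sin\bgt\|_F = \|\bc\|_F \le \|\bw_\perp^T(\bh-\bh')\bw'\|_F/\gd \le \|\bh-\bh'\|_F/\gd$. Chaining this with the geometric step gives $\min_{\br}\|\bw\br-\bw'\|_F \le \sqrt{2}\,\|\bh-\bh'\|_F/\gd$, as claimed.

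\textbf{Main obstacle.} The delicate point is the spectral-gap bookkeeping in the Sylvester step: the denominators $(\Lambda_2)_{ii}-(\Lambda'_1)_{kk}$ pair an eigenvalue of $\bh$ with an eigenvalue of $\bh'$, whereas $\gd$ is phrased through the spectrum of $\bh$ relative to $\cV$. I would need to argue carefully that, because one index sits inside the interval $\cV$ and the other strictly outside, the relevant separation is at least $\gd$, and to keep the definitions of $\bw,\bw'$ (fixed only up to orthogonal rotation within each eigenspace) consistent so that the rotation $\br$ of the geometric step is genuinely the one realizing the minimum. The remaining ingredients — the trace maximization, the elementary inequality $1-\cos\gt_i\le\sin^2\gt_i$, and the entrywise division — are routine once the gap is correctly accounted for.
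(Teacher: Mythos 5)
Your geometric reduction is correct and standard: the von Neumann trace argument gives $\min_{\br}\|\bw\br-\bw'\|_F^2 = 2\sum_i(1-\cos\gt_i)$, the inequality $1-\cos\gt_i\le\sin^2\gt_i$ yields the factor $\sqrt{2}$, and $\|\sin\bgt\|_F=\|\bw_\perp^T\bw'\|_F$ is the right target (the paper itself offers no proof of this theorem — it is imported from Davis--Kahan — so your attempt can only be judged against the standard argument, which is indeed the one you outline). The problem is that your analytic core does not close, and it fails at exactly the point you flagged as the "main obstacle." In the Sylvester step the denominators are $(\Lambda_2)_{ii}-(\Lambda'_1)_{kk}$ with $(\Lambda_2)_{ii}\in\sigma(\bh)\setminus\cV$ and $(\Lambda'_1)_{kk}\in\sigma(\bh')\cap\cV$. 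Membership of $(\Lambda'_1)_{kk}$ in the interval gives only $|(\Lambda_2)_{ii}-(\Lambda'_1)_{kk}|\ge \mathrm{dist}\big((\Lambda_2)_{ii},\cV\big)$, i.e.\ the distance from the outside eigenvalues of $\bh$ to the \emph{interval}, which can be much smaller than $\gd$ as defined in the statement: $\gd$ pairs eigenvalues of $\bh$ with eigenvalues of $\bh$, and is large whenever the in-$\cV$ eigenvalues sit deep inside $\cV$ even if an outside eigenvalue hugs the boundary of $\cV$. So your argument actually proves $\min_{\br}\|\bw\br-\bw'\|_F\le \sqrt{2}\,\|\bh-\bh'\|_F/\mathrm{dist}\big(\sigma(\bh)\setminus\cV,\,\cV\big)$, which is weaker than the claim since $\mathrm{dist}(\sigma(\bh)\setminus\cV,\cV)\le\gd$.

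Moreover, this is not a bookkeeping issue you could "argue carefully" around: with $\gd$ as literally defined the inequality is false. Take $\cV=[0,1]$, $\bh=\mathrm{diag}(1/2,\,1.1)$, $\bh'=\mathrm{diag}(1.06,\,0.99)$; then $\bw=e_1$, $\bw'=e_2$, $d=1$, $\gd=0.6$, and $\|\bh-\bh'\|_F=\sqrt{0.56^2+0.11^2}\approx 0.571$, so $\sqrt{2}\,\|\bh-\bh'\|_F/\gd\approx 1.35 < \sqrt{2} = \min_{\br=\pm1}\|\bw\br-\bw'\|_F$. The statement (and your proof) becomes correct under the standard formulation in which $\gd$ is the distance between the interval $\cV$ and the eigenvalues of $\bh$ outside $\cV$: then every denominator satisfies $|(\Lambda_2)_{ii}-(\Lambda'_1)_{kk}|\ge\mathrm{dist}((\Lambda_2)_{ii},\cV)\ge\gd$ and your chain of estimates goes through verbatim. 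That corrected version also suffices for the use made of the theorem in this paper, where the top $Q$ eigenvalues of $\D$ are of order $n$, the remaining ones are $O(1)$ in absolute value, and one may take $\cV=(Cn,\infty)$, so the two notions of gap agree up to constants. One last small repair: for $\Lambda'_1$ to be diagonal you must choose the columns of $\bw'$ to be actual eigenvectors of $\bh'$, which you may do without loss of generality since both $\|\bw_\perp^T\bw'\|_F$ and $\min_{\br}\|\bw\br-\bw'\|_F$ are invariant under right-multiplication of $\bw'$ by an orthogonal matrix.
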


\subsection{Proof of Theorem \ref{thm_mis}} 
Now, we can try to approximate limiting operator by the graph distance matrix $\bld$ in Frobenius norm based on Theorem \ref{thm_geo_dis} of Part I. The behavior of the eigenvalues of the limiting operator $\D$ can be stated as follows -
\begin{lemma}
The eigenvalues of $\D$ - $|\gl_1(\D)| \geq |\gl_2(\D)| \geq\cdots \geq|\gl_n(\D)|$, can be bounded as follows -
\begin{align}
\label{eq_eigvl_rel}
\gl_1(\D) < n,\ |\gl_K(\D)| > Cn,\ \gl_{Q+1}(\D) = -\min\{\tilde{d}_1,\ldots,\tilde{d}_Q\}, \ldots, \gl_{n} = -\max\{\tilde{d}_1,\ldots,\tilde{d}_Q\}
\end{align}
where, $\tilde{d}$, a vector of length $Q$, is defined in Eq. \eqref{eq_lim_rel} and the smallest $(n-Q)$ absolute eigenvalues of $\D$ are $-\tilde{d}$ where $-\tilde{d}_a$ has multiplicity $(n_a-1)$ for $a=1, \ldots, Q$.
\end{lemma}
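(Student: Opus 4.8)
The plan is to exploit the exact block structure recorded in Eq.~\eqref{eq_lim_rel}. Writing $Z$ for the $n\times Q$ membership matrix (row $i$ has a single $1$ in the column indexing the type of $v_i$), the permuted operator is $\D\br = Z\cd Z^{T} - \mathrm{Diag}(\tilde{d})$, where $Z\cd Z^{T}$ is the block-constant matrix $\cd\star\bj$ and has rank at most $Q$. Since permutation by $\br$ does not change the spectrum, it suffices to diagonalize $M - \mathrm{Diag}(\tilde{d})$ with $M\equiv Z\cd Z^{T}$. I would split $\R^{n}$ into the range of $Z$ and its orthogonal complement $\{x:Z^{T}x=0\}$ (vectors whose coordinates sum to zero over each block), and observe that both subspaces are invariant under $M$ (its range lies in $\mathrm{range}(Z)$) and under $\mathrm{Diag}(\tilde{d})$ (since $\tilde{d}$ is constant on each block); hence both are $\D$-invariant and the spectrum splits accordingly.

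On the complement I expect the eigenvalues to come out exactly. If $x$ is supported on block $a$ and $\mathbf{1}_{n_a}^{T}x=0$, then $Z^{T}x=0$ forces $Mx=0$, while $\mathrm{Diag}(\tilde{d})x = \tilde{d}_a x$ because $\tilde{d}$ equals $1/\log(\pi_a K_{aa})$ on the type-$a$ coordinates; thus $\D x = -\tilde{d}_a x$. The space of such vectors has dimension $n_a-1$ for each $a$, for a total of $\sum_a(n_a-1)=n-Q$, which exhausts $\mathrm{range}(Z)^{\perp}$. This produces the claimed bottom of the spectrum: the eigenvalue $-\tilde{d}_a$ with multiplicity $n_a-1$, all of magnitude $O(1)$.

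On $\mathrm{range}(Z)$, which is $Q$-dimensional, I would use the orthonormal basis $\hat{z}_a\equiv z_a/\sqrt{n_a}$ of normalized block indicators. A short computation gives $\langle \hat{z}_c, M\hat{z}_b\rangle = \sqrt{n_c n_b}\,\cd_{cb}$ and $\langle \hat{z}_c,\mathrm{Diag}(\tilde{d})\hat{z}_b\rangle = \tilde{d}_b\delta_{cb}$, so the restriction of $\D$ is represented by the symmetric $Q\times Q$ matrix $\tilde{B} = D_n\cd D_n - \mathrm{diag}(\tilde{d}_a)$ with $D_n=\mathrm{diag}(\sqrt{n_a})$. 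The matrix $D_n\cd D_n$ is congruent to $\cd$, and for any $x$ one has $x^{T}D_n\cd D_n x \ge \gl_{\min}(\cd)\sum_a n_a x_a^2 \ge \ga\,\gt\, n\,\|x\|^{2}$, using C2 ($\gl_{\min}(\cd)\ge\ga>0$, so $\cd$ is positive definite) and C4 ($n_a\ge\gt n$). Since $\|\mathrm{diag}(\tilde{d}_a)\|=\max_a\tilde{d}_a=O(1)$, Weyl's inequality yields $\gl_{\min}(\tilde{B})\ge \ga\gt n - O(1)\ge Cn$ for large $n$, so all $Q$ of these eigenvalues are $\Theta(n)$ and in particular $|\gl_Q(\D)|>Cn$. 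The matching upper bound follows from $\gl_1(\D)\le\gl_1(M)=\gl_1(D_n\cd D_n)=O(n)$ (subtracting the positive semidefinite $\mathrm{Diag}(\tilde{d})$ only lowers the top eigenvalue), which is the order statement behind $\gl_1(\D)<n$.

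The routine parts are the two inner-product computations and the dimension count; the one step that needs genuine care is the localization of the top block, where the low-dimensional matrix $\cd$ enters through the \emph{non-orthonormal} frame $\{z_a\}$. The resolution I would stress is the congruence $M|_{\mathrm{range}(Z)}\cong D_n\cd D_n$, which lets the eigenvalue bound $\ga$ on $\cd$ from C2 be transferred, via the block sizes from C4, into the $\Omega(n)$ separation between the top $Q$ eigenvalues and the $O(1)$ tail — exactly the gap that the Davis--Kahan step of Theorem~\ref{thm_mis} will consume.
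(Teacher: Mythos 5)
Your proposal is correct and follows the same underlying route as the paper's own proof: read the spectrum off the rank-$Q$ block-constant-plus-diagonal structure of Eq.~\eqref{eq_lim_rel}, use (C2) and (C4) to push the top $Q$ eigenvalues up to order $n$, and attribute the remaining $n-Q$ eigenvalues to the diagonal part. Your execution is in fact more careful than the paper's: the explicit invariant-subspace split into $\mathrm{range}(Z)$ and its orthogonal complement, the congruence with $D_n\cd D_n$, and the dimension count supply the details the paper omits, and your computation correctly yields the tail eigenvalues $-\tilde{d}_a$ with multiplicity $n_a-1$ as asserted in the lemma statement (the paper's proof instead misstates these as $-1$ with multiplicity $n_a$), while your reading of $\gl_1(\D)<n$ as an $O(n)$ order statement is the only sensible interpretation, since a literal bound by $n$ would require $\gl_1(\cd)<1$, which does not follow from the stated conditions.
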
 
\begin{proof}
The matrix $\D$ can be considered as a Khatri-Rao product of the matrices $\cd$ and $\bj$ according to equation \eqref{eq_lim_rel}. Now, there exists a constant $\tau$ such that $\log||T_K||>\tau>0$, since $||T_K||>1$. So, we have $\gl_1(\cd) < \tau$. So, we have $\gl_1(\cd) < 1$ and since $n_a \leq n$ for all $a$ and $\sum_an_a = n$. So, we have $\gl_1(\D)\leq n$. Now, By Assumption (C2) and (C4), $\gl_Q(\cd) \geq \ga$ and $n_a \geq \gm n$, so, $\gl_Q(\D)\geq \ga\gm n$. Now, it is easy to see that the remaining eigenvalues of $\D$ is -1, since, $\cb\star\bj$ is a rank $Q$ matrix and its remaining eigenvalues are zero and the eigenvalues of diagonal matrix are $\tilde{d}$ with $\tilde{d}_a$ having multiplicity $(n_a)$ for $a=1, \ldots, Q$.
\end{proof}
\begin{corollary}
\label{cor_eig}
With high probability it holds that $|\gl_Q(\bld/\log n)| \geq O(n)$ and \\ 
$\gl_{Q+1}(\bld/\log n) \leq O(n^{1-\gve})$.
\end{corollary}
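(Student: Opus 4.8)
The plan is to transfer the eigenvalue estimates for the limiting operator $\D$ recorded in \eqref{eq_eigvl_rel} to the random matrix $\bld/\log n$ by treating the latter as a small perturbation of $\D$ and invoking Weyl's perturbation inequality for symmetric matrices. Both $\bld/\log n$ and $\D$ are real symmetric (they are distance matrices), so for every index $i$ the signed eigenvalues obey $|\gl_i(\bld/\log n) - \gl_i(\D)| \leq ||\bld/\log n - \D||_{op}$. To control the right-hand side I would use Theorem \ref{thm_geo_dis}: with probability $1 - o(1)$ we have $||\bld/\log n - \D||_F \leq O(n^{1-\gve})$, and since $||M||_{op}\leq ||M||_F$ for any matrix $M$, the same bound holds in operator norm on that event. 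From here on one works deterministically on this high-probability event.

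Next I would read off the unperturbed spectrum. Using \eqref{eq_lim_rel}, $\D$ (up to the row permutation $\br$) equals $\cd\star\bj - Diag(\tilde{d})$. The first summand is positive semidefinite of rank $Q$ with nonzero eigenvalues of order $n$: by conditions (C2) and (C4) the nonzero eigenvalues of $\cd\star\bj$ coincide with those of a matrix congruent to $\cd$ scaled by the block sizes, so that for $x\neq 0$ one has $x^T(\cd\star\bj)x = (D_n^{1/2}x)^T\cd(D_n^{1/2}x) \geq \ga\,\gt\, n\,||x||_2^2$ where $D_n = \mathrm{diag}(n_1,\ldots,n_Q)$ and $n_a \geq \gt n$, giving a lower bound of order $n$. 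The subtracted term $Diag(\tilde{d})$ contributes only the $O(1)$ quantities $\tilde{d}_a = 1/\log(\pi_aK_{aa})$. Consequently the $Q$ largest signed eigenvalues of $\D$ are positive with $\gl_Q(\D) \geq Cn$, while the remaining $n-Q$ signed eigenvalues all lie in $[-\max_a\tilde{d}_a,\,-\min_a\tilde{d}_a]$ and are therefore $O(1)$.

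Applying Weyl index by index then finishes the argument. For $i \leq Q$ we get $\gl_i(\bld/\log n) \geq Cn - O(n^{1-\gve}) = \Omega(n)$, so in particular $|\gl_Q(\bld/\log n)| \geq O(n)$. For every $i \geq Q+1$ we get $|\gl_i(\bld/\log n)| \leq O(1) + O(n^{1-\gve}) = O(n^{1-\gve})$, and in particular $\gl_{Q+1}(\bld/\log n) \leq -\min_a\tilde{d}_a + O(n^{1-\gve}) = O(n^{1-\gve})$, which is the stated bound.

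The one point requiring care — and the main (if minor) obstacle — is the ordering convention. Because $\bld$ is not positive semidefinite, the ordering relevant for spectral clustering is by absolute value, whereas Weyl naturally controls signed eigenvalues. I would reconcile the two by noting that the top $Q$ eigenvalues of $\D$ are positive of order $n$ while every other eigenvalue is $O(1)$; since $n^{1-\gve} = o(n)$, the perturbation cannot move any eigenvalue across the $\Omega(n)$ versus $O(n^{1-\gve})$ separation. Hence the $Q$ eigenvalues of $\bld/\log n$ of largest magnitude are exactly its $Q$ largest signed eigenvalues, so the absolute-value ordering agrees with the signed ordering on the leading block. This yields precisely the spectral gap $|\gl_Q(\bld/\log n)| \geq O(n)$ against $|\gl_{Q+1}(\bld/\log n)| \leq O(n^{1-\gve})$ that is needed when applying Davis-Kahan (Theorem \ref{thm_dk}) in the proof of Theorem \ref{thm_mis}.
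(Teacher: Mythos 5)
Your proposal is correct and follows essentially the same route as the paper: combine the spectrum of $\D$ from the preceding lemma with the Frobenius-norm bound of Theorem \ref{thm_geo_dis} via Weyl's inequality to transfer the $\Omega(n)$ versus $O(1)$ eigenvalue separation to $\bld/\log n$. Your added care about the signed-versus-absolute-value ordering (and the congruence argument showing the nonzero spectrum of $\cd\star\bj$ matches that of $D_n^{1/2}\cd D_n^{1/2}$) tightens points the paper glosses over, but the underlying argument is the same.
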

\begin{proof}
By Weyl's Inequality, for all $i=1, \ldots, n$,
\begin{eqnarray*}
||\gl_i(\bld/\log n)| - |\gl_i(\D)|| & \leq & \left|\left|\frac{\bld}{\log n} - \D\right|\right|_F \leq O(n^{1-\gve/2}) \\
 & \leq & O(n^{1-\gve})
\end{eqnarray*}
So, $|\gl_Q(\bld/\log n)| \geq O(n) - O(n^{1-\gve}) = O(n)$ for large $n$ and $|\gl_{Q+1}(\bld/\log n)| \leq -1 + O(n^{1-\gve}) = O(n^{1-\gve})$.
\end{proof}
Now, if we consider $\bw$ is the eigenspace corresponding to top $Q$ absolute eigenvalues of $\D$ and $\tilde{\bw}$ is the eigenspace corresponding to top $Q$ absolute eigenvalues of $\bld$. Using Davis-Kahan
\begin{lemma}
\label{lm_eigsp}
With high probability, there exists an orthogonal matrix $\br\in\R^{Q\times Q}$ such that $||\bw\br - \tilde{\bw}||_{F} \leq O(n^{-\gve})$
\end{lemma}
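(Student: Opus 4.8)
The plan is to deduce the lemma from the Davis--Kahan $\sin\gt$ theorem (Theorem \ref{thm_dk}) applied to the pair of symmetric matrices $\bh = \D$ and $\bh' = \bld/\log n$. Both are genuinely symmetric (the graph-distance matrix is symmetric and $\D$ is symmetric by Definition \ref{def_op_norm}), so the theorem applies. The Frobenius distance $||\bh - \bh'||_F = ||\bld/\log n - \D||_F$ is already controlled by Theorem \ref{thm_geo_dis}, which supplies the numerator $O(n^{1-\gve})$ with high probability; the only work left is to exhibit a spectral interval $\cV$ isolating the top $Q$ eigenvalues and to lower-bound the associated gap $\gd$ by a quantity of order $n$.

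First I would fix the interval. Condition (C2) forces every eigenvalue of $\cd$ to be positive and bounded below by $\ga$, and $\cd\star\bj$ is, after the permutation in Eq. \eqref{eq_lim_rel}, a rank-$Q$ symmetric matrix whose $Q$ nonzero eigenvalues scale like $n$ (each block is an $n_a\times n_b$ constant block with $n_a\geq\gt n$ by (C4)). Hence the top $Q$ eigenvalues of $\D = \cd\star\bj - Diag(\tilde{d})$ are positive and of order $n$, while the remaining $n-Q$ eigenvalues equal the bounded constants $-\tilde{d}_a$. I would therefore take $\cV = (cn,\infty)$ for a fixed constant $c\in(0,C)$, where $\gl_Q(\D)\geq Cn$ by the preceding eigenvalue lemma. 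With this choice $\D$ has exactly $Q$ eigenvalues in $\cV$, and the columns of $\bw$ form an orthonormal basis for the corresponding eigenspace. By Corollary \ref{cor_eig}, $|\gl_Q(\bld/\log n)|\geq O(n)$ while $|\gl_{Q+1}(\bld/\log n)|\leq O(n^{1-\gve})$, and a Weyl-inequality argument (comparing with the positive top-$Q$ spectrum of $\D$) shows the top $Q$ eigenvalues of $\bld/\log n$ are positive; thus $\bld/\log n$ likewise has exactly $Q$ eigenvalues in $\cV$, whose eigenspace is spanned by $\tilde{\bw}$. This dimension matching is what makes ``top $Q$ absolute eigenvalues'' coincide with ``eigenvalues in $\cV$'' for both matrices.

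Next I would bound the gap. The smallest eigenvalue of $\D$ inside $\cV$ is $\gl_Q(\D)\geq Cn$, and the largest eigenvalue of $\D$ outside $\cV$ is $-\min_a \tilde{d}_a$, a bounded negative constant, so $\gd = \Theta(n)$. Substituting into Theorem \ref{thm_dk} then yields, on the high-probability event where Theorem \ref{thm_geo_dis} and Corollary \ref{cor_eig} both hold, an orthogonal $\br\in\R^{Q\times Q}$ with
\begin{align*}
||\bw\br - \tilde{\bw}||_F \leq \sqrt{2}\,\frac{||\bld/\log n - \D||_F}{\gd} \leq \frac{O(n^{1-\gve})}{\Theta(n)} = O(n^{-\gve}),
\end{align*}
which is exactly the claim.

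The main obstacle is not the final Davis--Kahan computation but the bookkeeping preceding it: one must verify that $\cV$ isolates precisely $Q$ eigenvalues of \emph{each} matrix, and in particular that under the perturbation the top-$Q$ eigenspace of $\bld/\log n$ does not mix with any of the $O(n^{1-\gve})$-scale eigenvalues. This needs the eigenvalue lemma for $\D$, conditions (C1)--(C4) (for positivity and the $\Theta(n)$ scaling of the leading block), and Corollary \ref{cor_eig} (for survival of the $\Theta(n)$ spectral gap), so that the notion ``top $Q$ absolute eigenvalues'' is unambiguous and gap-separated for both $\D$ and $\bld/\log n$ simultaneously.
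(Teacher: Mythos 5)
Your proposal follows essentially the same route as the paper: apply the Davis--Kahan $\sin\gt$ theorem with $\bh=\D$, $\bh'=\bld/\log n$, an interval of the form $(cn,\infty)$ isolating the top $Q$ eigenvalues, a gap $\gd=\Theta(n)$ from the eigenvalue lemma and Corollary \ref{cor_eig}, and the numerator $O(n^{1-\gve})$ from Theorem \ref{thm_geo_dis}. Your version is somewhat more careful than the paper's about verifying that the interval captures exactly $Q$ eigenvalues of each matrix, but the argument is the same.
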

\begin{proof}
The top $Q$ eigenvalues of both $\D$ and $\bld/\log n$ lies in $(Cn, \infty)$ for some $C>0$. Also, the gap $\gd = O(n)$ between top $Q$ and $Q+1$th eigenvalues of matrix $\D$. So, now, we can apply Davis-Kahan Theorem \ref{thm_dk} and Theorem \ref{thm_geo_dis}, to get that,
\begin{align*}
||\bw\br - \tilde{\bw}||_F \leq \sqrt{2}\frac{\left|\left|\frac{\bld}{\log n} - \D\right|\right|_F}{\gd} \leq \frac{O(n^{1-\gve})}{O(n)} = O(n^{-\gve})
\end{align*} 
\end{proof}

Now, the relationship between the rows of $W$ can be specified based on Assumption (C3) as follows -
\begin{lemma}
\label{lm_rows}
For any two rows $i,j$ of $\bw_{n\times Q}$ matrix, $||u_i - u_j||_2 \geq O(1/\sqrt{n})$, if type of $v_i\neq $ type of $v_j$.
\end{lemma}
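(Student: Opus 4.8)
The plan is to identify the top-$Q$ eigenspace of $\D$ \emph{exactly}, rather than through a perturbation bound, and then read off the row separation by a one-line computation. Recall from \eqref{eq_lim_rel} that, after the permutation $\br$ that groups vertices by type, $\D = \V{Z}\cd\V{Z}^T - Diag(\tilde{d})$, where $\V{Z}\in\{0,1\}^{n\times Q}$ is the type-membership matrix $\V{Z}_{ia} = \mathbf{1}(\mbox{type}(v_i)=a)$ and $\V{Z}^T\V{Z} = \mbox{diag}(n_1,\ldots,n_Q)=:\V{N}$. (Relabelling rows by $\br$ only permutes the rows $u_i$, so it is harmless for the statement.) The decisive structural observation is that the orthogonal splitting $\R^n = \mbox{range}(\V{Z}) \oplus \mbox{range}(\V{Z})^\perp$ --- block-constant vectors versus within-block mean-zero vectors --- reduces $\D$. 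Indeed, since $\tilde{d}$ is constant on each type-block, $Diag(\tilde{d})$ maps each of the two subspaces into itself: for $x=\V{Z}y$ one checks $Diag(\tilde{d})\V{Z}y = \V{Z}\,\mbox{diag}(\tilde{d})y$, while for $\V{Z}^Tx=0$ one has $\V{Z}^T Diag(\tilde{d})x = 0$. Hence on $\mbox{range}(\V{Z})$ the operator $\D$ is represented by the $Q\times Q$ matrix $\cd\V{N}-\mbox{diag}(\tilde{d})$, and on the complement it acts as $-Diag(\tilde{d})$ with eigenvalues $\{-\tilde{d}_a\}$.

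Next I would show that the top $Q$ eigenvalues (by absolute value) are precisely those coming from $\mbox{range}(\V{Z})$, so that the top-$Q$ eigenspace equals $\mbox{range}(\V{Z})$. By (C2) the matrix $\cd$ is positive definite with $\gl_Q(\cd)\geq\ga$, and by (C4) every $n_a\geq\gt n$; since $\cd\V{N}$ is similar to the symmetric $\V{N}^{1/2}\cd\V{N}^{1/2}$, its eigenvalues are at least $\ga\gt n$, and subtracting $\mbox{diag}(\tilde{d})=O(1)$ leaves them $\Theta(n)$, whereas the complementary eigenvalues $-\tilde{d}_a$ are $O(1)$; this is exactly the eigenvalue separation already recorded in \eqref{eq_eigvl_rel}. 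A further point that I would stress is basis-invariance: $\bw$ is \emph{any} orthonormal basis of this eigenspace, and two such bases differ by $\bw' = \bw O$ with $O\in\R^{Q\times Q}$ orthogonal, whence $\|u_i'-u_j'\| = \|(u_i-u_j)O\| = \|u_i-u_j\|$. The row distances are therefore well defined independently of the choice of eigenbasis, which frees me to pick the most convenient one.

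Finally, choosing the orthonormal basis $\{\mathbf{1}_a/\sqrt{n_a}\}_{a=1}^Q$ of $\mbox{range}(\V{Z})$ (orthonormal because the type-blocks are disjoint), the $i$-th row becomes $u_i = e_a/\sqrt{n_a}$ when $\mbox{type}(v_i)=a$, with $e_a$ the $a$-th standard vector in $\R^Q$. For $\mbox{type}(v_i)=a\neq b=\mbox{type}(v_j)$ this gives
\begin{align*}
\|u_i-u_j\|_2^2 = \frac{1}{n_a}+\frac{1}{n_b} \geq \frac{2}{n},
\end{align*}
so $\|u_i-u_j\|_2 \geq \sqrt{2}\,/\sqrt{n} = O(1/\sqrt{n})$, as claimed. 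The main obstacle lies entirely in the first two paragraphs: verifying that the block-constant subspace is genuinely $\D$-invariant (the diagonal term $Diag(\tilde{d})$ is what could break this, and it is only its constancy within blocks that saves the reduction) and that it carries the top $Q$ eigenvalues. Once this exact reduction is in hand the separation is immediate; I would deliberately avoid a Davis--Kahan comparison of $\D$ with $\V{Z}\cd\V{Z}^T$ here, since $\|Diag(\tilde{d})\|_F=\Theta(\sqrt{n})$ is of the same order as the gap-normalised signal and would yield only a borderline $O(1/\sqrt{n})$ control on individual rows. Condition (C3) guarantees that the low-dimensional eigenstructure is nondegenerate, but the final bound rests only on the balance of the block sizes furnished by (C4).
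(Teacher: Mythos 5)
Your proof is correct and takes a genuinely different --- and considerably more complete --- route than the paper's. The paper's own argument is a two-line sketch: it writes $\D$ via the Khatri--Rao form $\cd\star\bj - Diag(\tilde d)$ of \eqref{eq_lim_rel}, invokes (C3) to assert a constant separation between rows at the $Q\times Q$ level, and then asserts without computation that the rows of the projection of $\D$ onto its top-$Q$ eigenspace inherit an $O(n^{-1/2})$ separation. You instead identify the top-$Q$ eigenspace of $\D$ \emph{exactly} as the span of the block indicator vectors: the block-constancy of $\tilde d$ makes both $\mathrm{range}(\V{Z})$ and its orthogonal complement invariant under $\D$, the two invariant pieces carry eigenvalues of size $\Theta(n)$ and $O(1)$ respectively (consistent with the paper's eigenvalue lemma \eqref{eq_eigvl_rel}), and the explicit orthonormal basis $\mathbf{1}_a/\sqrt{n_a}$ then yields $\|u_i-u_j\|_2^2 = 1/n_a + 1/n_b \geq 2/n$ on the nose. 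What this buys: it dispenses with (C3) entirely (the separation follows from (C2), (C4) and the block structure alone), it makes explicit that the row distances are invariant under the basis ambiguity $\bw\mapsto\bw O$, and it closes the paper's unjustified leap from separation of the rows of $\cd$ (or of its eigenvector matrix) to separation of the rows of the projected $\D$. One caution: you read (C2) literally as positive-definiteness of $\cd$, but under (C1) the off-diagonal entries of $\cd$ exceed the diagonal ones, which for $Q=2$ forces $\det\cd<0$; (C2) is evidently meant as $|\gl_Q(\cd)|\geq\ga$, and your argument survives that reading unchanged because only the absolute eigenvalue gap is needed to identify the top-$Q$ eigenspace by absolute value.
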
 
\begin{proof}
The matrix $\D$ can be considered as a Khatri-Rao product of the matrices $\cd$ and $\bj$ according to equation \eqref{eq_lim_rel}. Now, by Assumption (C3), we have a constant difference between the rows of matrix $\cd$. So, rows of $\D$ as well as the projection of $\D$ into into its top $Q$ eigenspace has difference of order $O(n^{-1/2})$ between rows of matrix. 
\end{proof}

Now, if we consider $Q$-means criterion as the clustering criterion on $\tilde{\bw}$, then, for the $Q$-means minimizer centroid matrix $\bc$ is an $n\times Q$ matrix with $Q$ distinct rows corresponding to the $Q$ centroids of $Q$-means algorithm. By property of $Q$-means objective function and Lemma \ref{lm_eigsp}, with high probability,
\begin{eqnarray*}
||\bc - \tilde{\bw}||_F & \leq & ||\bw\br - \tilde{\bw}||_F\\
||\bc - \bw\br||_F & \leq & ||\bc - \tilde{\bw}||_F + ||\bw\br - \tilde{\bw}||_F \\
 & \leq & 2||\bw\br - \tilde{\bw}||_F \\
 & \leq & O(n^{-\gve}) \\
\end{eqnarray*}

By Lemma \ref{lm_rows}, for large $n$, we can get constant $C$, such that, $Q$ balls, $B_1, \ldots, B_Q$, of radius $r = Cn^{-1/2}$ around $Q$ distinct rows of $\bw$ are disjoint.

Now note that with high probability the number of rows $i$ such that $||\bc_i - (\bw\br)_i|| > r$ is at most $O(n^{1/2-\gve})$. If the statement does not hold then,
\begin{eqnarray*}
||\bc-\bw\br||_F & > & r.O(n^{1/2-\gve}) \\
 & \geq & Cn^{-1/2}.O(n^{1/2-\gve}) = O(n^{-\gve})
\end{eqnarray*}
So, we get a contradiction, since $||\bc - \bw\br||_F \leq O(n^{-\gve})$. Thus, the number of mistakes should be at most of order $O(n^{1/2-\gve})$. 

So, for each $v_i \in V(G_n)$, if $\xi_i$ is the type of $v_i$ and $\hat{\xi}_i$ is the type of $v_i$ as estimated from applying $Q$-means on top $Q$ eigenspace of geodesic matrix $\bld$, we get that with high probability, for some small $0<\gve$,
\begin{align*}
\min_{\pi\in\mathcal{P}_Q} |\{u \in V: \xi(u)\neq \pi(\hat{\xi}(u))\}| = O(n^{1/2-\gve})
\end{align*}

\section{Application}
\label{sec_geo_appl}
We investigate the empirical performance of the algorithm in several different setup. At first, we use simulated networks from stochastic block model to find the empirical performance of the algorithm. Then, we apply our method to find communities in several real world networks.

\subsection{Simulation}
We simulate networks from stochastic block models with $Q=3$ blocks. Let $w$ correspond to a $Q$-block model defined by parameters $\gt = (\V{\pi}, \rho_n, S)$, where $\pi_a$ is the probability of a node being assigned to block $a$ as before, and
\begin{align*}
\mathbf{F}_{ab} = \Pr(A_{ij} =1|i\in a,j\in b)=\rho_nS_{ab},\ \ \  1\leq a,b \leq K.
\end{align*}
and the probability of node $i$ to be assigned to block $a$ to be $\pi_a$ ($a=1, \ldots, K$).

\subsubsection{Equal Density Clusters}
We consider a stochastic block model with $Q = 3$. We consider the parameter matrix $\mathbf{F} = 0.012(1+0.1\nu)(\til{\gl}F^{(1)} + (1-\til{\gl})F^{(2)})$, where, $F^{(1)}_{3\times 3} =  \mbox{Diag}(0.9, 0.9, 0.9)$ and $F^{(2)}_{3\times 3} = 0.1\mathbf{J}_2$, where, $\mathbf{J}_2$ is a $2\times 2$ matrix of all 1's and $\nu$ varies from 1 to 15 to give networks of different density. So, we get $\rho_n = \V{\pi}^T\mathbf{F}\V{\pi}$. We now, vary $\til{\gl}$ to get different combinations of $\mathbf{F}$ as well as $\rho_n$. 

In the following figures, we try to see the behavior of mean and variances of the count statistics, as we vary $\gl_n$ as we vary $\nu$.
\begin{figure}
\centering
  \includegraphics[height=4.5cm,width=5cm]{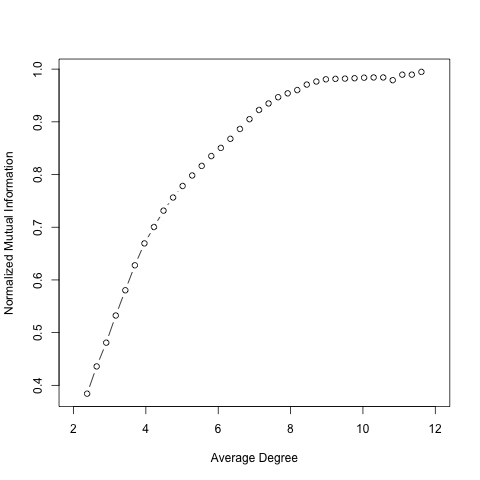}\hfill
  \includegraphics[height=4.5cm,width=5cm]{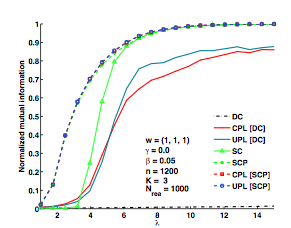}
 \caption{The LHS is the performance of graph distance based method and RHS is the performance of Pseudolikelihood method on same generative SBM.}
 \label{fig_eq}
 \end{figure}

\subsubsection{Unequal Density Clusters}
We consider a stochastic block model with $Q = 3$. We consider the parameter matrix $\mathbf{F} = 0.012(1+0.1\nu)(\til{\gl}F^{(1)} + (1-\til{\gl})F^{(2)})$, where, $F^{(1)}_{3\times 3} =  \mbox{Diag}(0.1, 0.5, 0.9)$ and $F^{(2)}_{3\times 3} = 0.1\mathbf{J}_2$, where, $\mathbf{J}_2$ is a $2\times 2$ matrix of all 1's and $\nu$ varies from 1 to 15 to give networks of different density. So, we get $\rho_n = \V{\pi}^T\mathbf{F}\V{\pi}$. We now, vary $\til{\gl}$ to get different combinations of $\mathbf{F}$ as well as $\rho_n$. 

In the following figures, we try to see the behavior of mean and variances of the count statistics, as we vary $\gl_n$ as we vary $\nu$.
\begin{figure}
\centering
  \includegraphics[height=4.5cm,width=5cm]{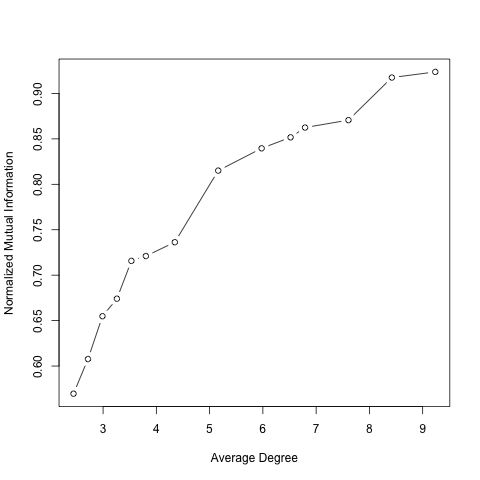}\hfill
  \includegraphics[height=4.5cm,width=5cm]{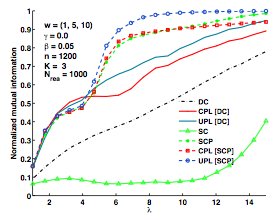}
 \caption{The LHS is the performance of graph distance based method and RHS is the performance of Pseudolikelihood method on same generative SBM.}
 \label{fig_uneq}
 \end{figure}

\subsection{Application to Real Network Data}
\subsubsection{Facebook Collegiate Network}
In this application, we try to find communities for Facebook collegiate networks. The networks were presented in the paper by Traud et.al. (2011) \cite{traud2011comparing}. The network is formed by Facebook users acting as nodes and if two Facebook users are ``friends" there is an edge between the corresponding nodes. Along with the network structure, we also have the data on covariates of the nodes. Each node has covariates: gender, class year, and data fields that represent (using anonymous numerical identifiers) high school, major, and dormitory residence. We consider the network of a specific college (Caltech). We compare the communities found with the dormitory affiliation of the nodes.
\begin{figure}
\centering
  \includegraphics[height=4.5cm,width=5cm]{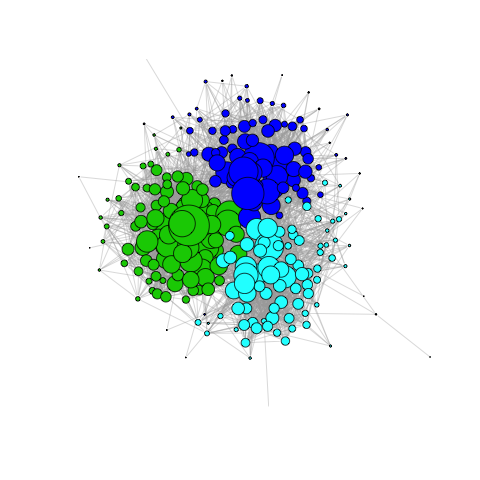}\hfill
  \includegraphics[height=4.5cm,width=5cm]{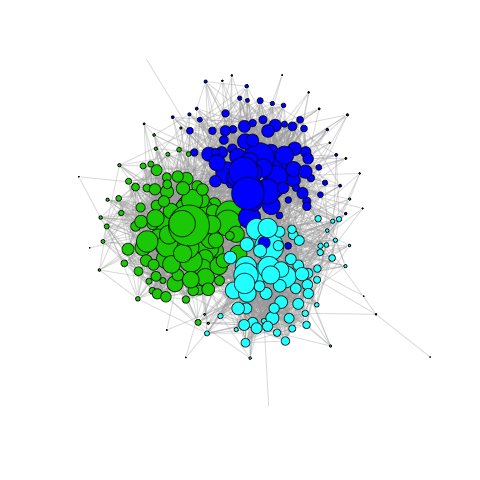}
 \caption{The LHS is community allocation and RHS is the one estimated by graph distance for Facebook Caltech network with 3 dorms.}
 \label{fig_fb}
 \end{figure}
 
\subsubsection{Political Web Blogs Network}
This dataset on political blogs was compiled by \cite{adamic2005political} soon after the 2004 U.S. presidential election. The nodes are blogs focused on US politics and the edges are hyperlinks between these blogs. Each blog was manually labeled as liberal or conservative by \cite{adamic2005political}, and we treat these as true community labels. We ignore directions of the hyperlinks and analyze the largest connected component of this network, which has 1222 nodes and the average degree of 27. The distribution of degrees is highly skewed to the right (the median degree is 13, and the maximum is 351). This is a network where the degree distribution is heavy-tailed and the graph is inhomogeneous.
\begin{figure}
\centering
  \includegraphics[height=4.5cm,width=5cm]{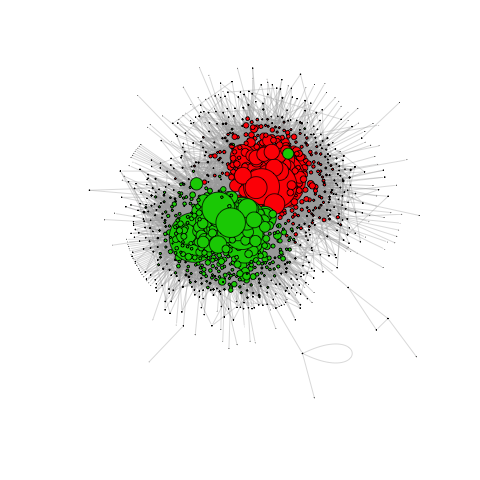}\hfill
  \includegraphics[height=4.5cm,width=5cm]{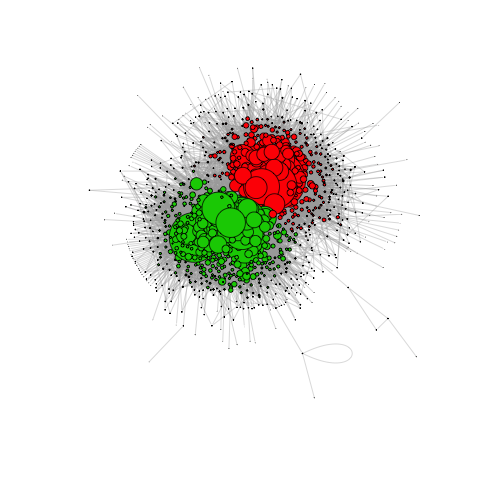}
 \caption{The LHS is community allocation and RHS is the one estimated by graph distance for Political Web blogs Network.}
 \label{fig_pwb}
 \end{figure}

\section{Conclusion}
\label{sec_geo_con}

The proposed graph distance based community detection algorithm gives a very general way for community detection for graphs over a large range of densities - from very sparse graphs to very dense graphs. We theoretically prove the efficacy of the method under the model that the graph is generated from stochastic block model with fixed number of blocks. We prove that the proportion of mislabeled communities goes to zero as the number of vertices $n \rar \infty$. This result is true for graphs coming from stochastic block model under certain conditions on the stochastic block model parameters. These conditions are satisfied above the threshold of block identification for two blocks as given in \cite{mossel2012stochastic}. The condition (C1) of $\mathbf{1}$ not being the eigenvector of $\tilde{K}$ for our community identification result to hold, seems to be an artificial one, as simulation suggests that our method is able to identify communities, even when $\mathbf{1}$ is an eigenvector of $\tilde{K}$.

We demonstrate the empirical performance of the method by using both simulated and real world networks. We compare with the pseudo-likelihood method and show that they have similar empirical performances. We demonstrate the empirical performance by applying the method for community detection in several real world networks too.

The method also works when number of blocks in the stochastic block model brows with $n$ (number of vertices) and for degree-corrected block model \cite{karrer2011stochastic}. We conjecture that under these models too the method will have the theoretical guarantee of correct community detection. The proof can be obtained by using similar techniques that we have used in this paper.

\bibliographystyle{plain}
\bibliography{network_geodesic_paper2}

\begin{thebibliography}{10}

\bibitem{adamic2005political}
Lada~A Adamic and Natalie Glance.
\newblock The political blogosphere and the 2004 us election: divided they
  blog.
\newblock In {\em Proceedings of the 3rd international workshop on Link
  discovery}, pages 36--43. ACM, 2005.

\bibitem{airoldi2008mixed}
Edoardo~M Airoldi, David~M Blei, Stephen~E Fienberg, and Eric~P Xing.
\newblock Mixed membership stochastic blockmodels.
\newblock {\em The Journal of Machine Learning Research}, 9:1981--2014, 2008.

\bibitem{athreya1972branching}
Krishna~B Athreya and Peter~E Ney.
\newblock {\em Branching processes}, volume~28.
\newblock Springer-Verlag Berlin, 1972.

\bibitem{ball2011efficient}
Brian Ball, Brian Karrer, and MEJ Newman.
\newblock Efficient and principled method for detecting communities in
  networks.
\newblock {\em Physical Review E}, 84(3):036103, 2011.

\bibitem{bhamidi2011first}
Shankar Bhamidi, Remco Van~der Hofstad, and Gerard Hooghiemstra.
\newblock First passage percolation on the erds-renyi random graph.
\newblock {\em Combinatorics, Probability \& Computing}, 20(5):683--707, 2011.

\bibitem{bickel2012asymptotic}
Peter Bickel, David Choi, Xiangyu Chang, and Hai Zhang.
\newblock Asymptotic normality of maximum likelihood and its variational
  approximation for stochastic blockmodels.
\newblock {\em arXiv preprint arXiv:1207.0865}, 2012.

\bibitem{bickel2009nonparametric}
Peter~J Bickel and Aiyou Chen.
\newblock A nonparametric view of network models and newman--girvan and other
  modularities.
\newblock {\em Proceedings of the National Academy of Sciences},
  106(50):21068--21073, 2009.

\bibitem{bickel2011method}
Peter~J Bickel, Aiyou Chen, and Elizaveta Levina.
\newblock The method of moments and degree distributions for network models.
\newblock {\em The Annals of Statistics}, 39(5):2280--2301, 2011.

\bibitem{bollobas2007phase}
B{\'e}la Bollob{\'a}s, Svante Janson, and Oliver Riordan.
\newblock The phase transition in inhomogeneous random graphs.
\newblock {\em Random Structures \& Algorithms}, 31(1):3--122, 2007.

\bibitem{bulucc2010solving}
Ayd{\i}n Bulu{\c{c}}, John~R Gilbert, and Ceren Budak.
\newblock Solving path problems on the gpu.
\newblock {\em Parallel Computing}, 36(5):241--253, 2010.

\bibitem{celisse2011consistency}
Alain Celisse, J-J Daudin, and Laurent Pierre.
\newblock Consistency of maximum-likelihood and variational estimators in the
  stochastic block model.
\newblock {\em arXiv preprint arXiv:1105.3288}, 2011.

\bibitem{chatelin1983spectral}
Fran{\c{c}}oise Chatelin.
\newblock {\em Spectral Approximation of Linear Operators}.
\newblock SIAM, 1983.

\bibitem{chaudhuri2012spectral}
Kamalika Chaudhuri, Fan~Chung Graham, and Alexander Tsiatas.
\newblock Spectral clustering of graphs with general degrees in the extended
  planted partition model.
\newblock {\em Journal of Machine Learning Research-Proceedings Track},
  23:35--1, 2012.

\bibitem{chen2012fitting}
Aiyou Chen, Arash~A Amini, Peter~J Bickel, and Elizaveta Levina.
\newblock Fitting community models to large sparse networks.
\newblock {\em arXiv preprint arXiv:1207.2340}, 2012.

\bibitem{clauset2004finding}
Aaron Clauset, Mark~EJ Newman, and Cristopher Moore.
\newblock Finding community structure in very large networks.
\newblock {\em Physical review E}, 70(6):066111, 2004.

\bibitem{coja2009finding}
Amin Coja-Oghlan and Andr{\'e} Lanka.
\newblock Finding planted partitions in random graphs with general degree
  distributions.
\newblock {\em SIAM Journal on Discrete Mathematics}, 23(4):1682--1714, 2009.

\bibitem{davis1970rotation}
Chandler Davis and William~Morton Kahan.
\newblock The rotation of eigenvectors by a perturbation. iii.
\newblock {\em SIAM Journal on Numerical Analysis}, 7(1):1--46, 1970.

\bibitem{decelle2011asymptotic}
Aurelien Decelle, Florent Krzakala, Cristopher Moore, and Lenka Zdeborov{\'a}.
\newblock Asymptotic analysis of the stochastic block model for modular
  networks and its algorithmic applications.
\newblock {\em Physical Review E}, 84(6):066106, 2011.

\bibitem{floyd1962algorithm}
Robert~W Floyd.
\newblock Algorithm 97: shortest path.
\newblock {\em Communications of the ACM}, 5(6):345, 1962.

\bibitem{fortunato2010community}
Santo Fortunato.
\newblock Community detection in graphs.
\newblock {\em Physics Reports}, 486(3):75--174, 2010.

\bibitem{habbal1994decomposition}
Mayiz~B Habbal, Haris~N Koutsopoulos, and Steven~R Lerman.
\newblock A decomposition algorithm for the all-pairs shortest path problem on
  massively parallel computer architectures.
\newblock {\em Transportation Science}, 28(4):292--308, 1994.

\bibitem{handcock2007model}
Mark~S Handcock, Adrian~E Raftery, and Jeremy~M Tantrum.
\newblock Model-based clustering for social networks.
\newblock {\em Journal of the Royal Statistical Society: Series A (Statistics
  in Society)}, 170(2):301--354, 2007.

\bibitem{hoff2002latent}
Peter~D Hoff, Adrian~E Raftery, and Mark~S Handcock.
\newblock Latent space approaches to social network analysis.
\newblock {\em Journal of the american Statistical association},
  97(460):1090--1098, 2002.

\bibitem{johnson1977efficient}
Donald~B Johnson.
\newblock Efficient algorithms for shortest paths in sparse networks.
\newblock {\em Journal of the ACM (JACM)}, 24(1):1--13, 1977.

\bibitem{karrer2011stochastic}
Brian Karrer and Mark~EJ Newman.
\newblock Stochastic blockmodels and community structure in networks.
\newblock {\em Physical Review E}, 83(1):016107, 2011.

\bibitem{kato1995perturbation}
Tosio Kat{\=o}.
\newblock {\em Perturbation theory for linear operators}, volume 132.
\newblock springer, 1995.

\bibitem{khatri1968solutions}
CG~Khatri and C~Radhakrishna Rao.
\newblock Solutions to some functional equations and their applications to
  characterization of probability distributions.
\newblock {\em Sankhy{\=a}: The Indian Journal of Statistics, Series A}, pages
  167--180, 1968.

\bibitem{kolaczyk2009statistical}
Eric~D Kolaczyk.
\newblock {\em Statistical analysis of network data}.
\newblock Springer, 2009.

\bibitem{krzakala2013spectral}
Florent Krzakala, Cristopher Moore, Elchanan Mossel, Joe Neeman, Allan Sly,
  Lenka Zdeborov{\'a}, and Pan Zhang.
\newblock Spectral redemption: clustering sparse networks.
\newblock {\em arXiv preprint arXiv:1306.5550}, 2013.

\bibitem{lancichinetti2009community}
Andrea Lancichinetti and Santo Fortunato.
\newblock Community detection algorithms: a comparative analysis.
\newblock {\em Physical review E}, 80(5):056117, 2009.

\bibitem{latouche2011overlapping}
Pierre Latouche, Etienne Birmel{\'e}, and Christophe Ambroise.
\newblock Overlapping stochastic block models with application to the french
  political blogosphere.
\newblock {\em The Annals of Applied Statistics}, 5(1):309--336, 2011.

\bibitem{leiserson2001introduction}
Charles~E Leiserson, Ronald~L Rivest, Clifford Stein, and Thomas~H Cormen.
\newblock {\em Introduction to algorithms}.
\newblock The MIT press, 2001.

\bibitem{leskovec2007graph}
Jure Leskovec, Jon Kleinberg, and Christos Faloutsos.
\newblock Graph evolution: Densification and shrinking diameters.
\newblock {\em ACM Transactions on Knowledge Discovery from Data (TKDD)},
  1(1):2, 2007.

\bibitem{leskovec2008statistical}
Jure Leskovec, Kevin~J Lang, Anirban Dasgupta, and Michael~W Mahoney.
\newblock Statistical properties of community structure in large social and
  information networks.
\newblock In {\em Proceedings of the 17th international conference on World
  Wide Web}, pages 695--704. ACM, 2008.

\bibitem{leskovec2009community}
Jure Leskovec, Kevin~J Lang, Anirban Dasgupta, and Michael~W Mahoney.
\newblock Community structure in large networks: Natural cluster sizes and the
  absence of large well-defined clusters.
\newblock {\em Internet Mathematics}, 6(1):29--123, 2009.

\bibitem{leskovec2010empirical}
Jure Leskovec, Kevin~J Lang, and Michael Mahoney.
\newblock Empirical comparison of algorithms for network community detection.
\newblock In {\em Proceedings of the 19th international conference on World
  wide web}, pages 631--640. ACM, 2010.

\bibitem{mcsherry2001spectral}
Frank McSherry.
\newblock Spectral partitioning of random graphs.
\newblock In {\em Foundations of Computer Science, 2001. Proceedings. 42nd IEEE
  Symposium on}, pages 529--537. IEEE, 2001.

\bibitem{mossel2012stochastic}
Elchanan Mossel, Joe Neeman, and Allan Sly.
\newblock Stochastic block models and reconstruction.
\newblock {\em arXiv preprint arXiv:1202.1499}, 2012.

\bibitem{newman2009networks}
Mark Newman.
\newblock {\em Networks: an introduction}.
\newblock OUP Oxford, 2009.

\bibitem{newman2006finding}
Mark~EJ Newman.
\newblock Finding community structure in networks using the eigenvectors of
  matrices.
\newblock {\em Physical review E}, 74(3):036104, 2006.

\bibitem{newman2006modularity}
Mark~EJ Newman.
\newblock Modularity and community structure in networks.
\newblock {\em Proceedings of the National Academy of Sciences},
  103(23):8577--8582, 2006.

\bibitem{newman2004finding}
Mark~EJ Newman and Michelle Girvan.
\newblock Finding and evaluating community structure in networks.
\newblock {\em Physical review E}, 69(2):026113, 2004.

\bibitem{newman2013spectral}
MEJ Newman.
\newblock Spectral community detection in sparse networks.
\newblock {\em arXiv preprint arXiv:1308.6494}, 2013.

\bibitem{rohe2011spectral}
Karl Rohe, Sourav Chatterjee, and Bin Yu.
\newblock Spectral clustering and the high-dimensional stochastic blockmodel.
\newblock {\em The Annals of Statistics}, 39(4):1878--1915, 2011.

\bibitem{solomonik2012minimizing}
Edgar Solomonik, Ayd{\i}n Bulu{\c{c}}, and James Demmel.
\newblock Minimizing communication in all-pairs shortest paths.
\newblock {\em University of California at Berkeley, Berkeley, US}, 2012.

\bibitem{sussman2012consistent}
Daniel~L Sussman, Minh Tang, Donniell~E Fishkind, and Carey~E Priebe.
\newblock A consistent adjacency spectral embedding for stochastic blockmodel
  graphs.
\newblock {\em Journal of the American Statistical Association},
  107(499):1119--1128, 2012.

\bibitem{traud2011comparing}
Amanda~L Traud, Eric~D Kelsic, Peter~J Mucha, and Mason~A Porter.
\newblock Comparing community structure to characteristics in online collegiate
  social networks.
\newblock {\em SIAM review}, 53(3):526--543, 2011.

\bibitem{warshall1962theorem}
Stephen Warshall.
\newblock A theorem on boolean matrices.
\newblock {\em Journal of the ACM (JACM)}, 9(1):11--12, 1962.

\end{thebibliography}

\section*{Appendix: Branching Process Results}
\subsection*{\textbf{A1.} Proof of Lemma \ref{lemma_brpr_sbm}}

We have $n_a$ vertices of type $a$ ,$a = 1, \ldots,Q$, and that $n_a/n \stackrel{a.s.}{\rar} \pi_a$. From now on we condition on $n_1,\ldots,n_Q$; we may thus assume that $n_1,\ldots,n_Q$ are deterministic with $n_a/n\rar \pi_a$.
Let $\omega(n)$ be any function such that $\omega(n) \rar \infty$ and $\omega(n)/n \rar 0$. We call a component of $G_n \equiv G(n,P) = G(n,K/n)$ big if it has at least $\om(n)$ vertices. Let $B$ be the union of the big components, so $|B| = N_{\geq \om(n)}(G_n)$.
Fix $\gre>0$.We may assume that $n$ is so large that $\om(n)/n<\gre$ $\pi_i$ and $|n_a/n - \pi_a|<\gre$ $\pi_a$ for every $a$; thus $(1- \gre)\pi_a n<n_a <(1+\gre)\pi_an$. We may also assume that $n>\max K$, as $K$ is a function on the finite set $\s \times \s$. Since, $n_a/n$ is a $\sqrt{n}$-consistent estimator of $\pi_a$, we get that 
\begin{align}
\label{eq_gre}
\gre = O(n^{-1/2}).
\end{align}

Select a vertex and explore its component in the usual way, that means looking at its neighbors, one vertex at a time. We first reveal all edges from the initial vertex, and put all neighbors that we find in a list of unexplored vertices; we then choose one of these and reveal its entire neighborhood, and so on. Stop when we have found at least $\om(n)$ vertices (so $x \in B$), or when there are no unexplored vertices left (so we have found the entire component and $x \notin B$).

Consider one step in this exploration, and assume that we are about to reveal the neighborhood of a vertex $x$ of type $a$. Let us write $n'_b$ for the number of unused vertices of type $b$ remaining. Note that $n_b \geq n'_b \geq n_b - \om(n)$, so
\begin{align}
\label{eq_prop_bound}
(1 - 2\gre)\pi_b < n'_b/n < (1 + \gre)\pi_b
\end{align}
The number of new neighbors of $x$ of type $b$ has a binomial $Bin(n'_b, K_{ab}/n)$ distribution, and the numbers for different $b$ are independent. The total variation distance between a binomial $Bin(n, p)$ distribution and the Poisson distribution with the same mean is at most p. Hence the total variation distance between the binomial distribution above and the Poisson distribution $Poi(K_{ab}n'_b/n)$ is at most $K_{ab}/n = O(1/n)$. Also, by \eqref{eq_prop_bound},
\begin{align}
\label{eq_br_sbm}
(1 - 2\gre)K_{ab}\pi_b \leq K_{ab}n'_b/n \leq (1+ \gre) K_{ab}\pi_b.
\end{align}
Since we perform at most $\om(n)$ steps in the exploration, we may, with an error probability of $O(\om(n)/n) = o(1)$, couple the exploration with two multi-type branching processes $\cb((1-2\gre)K)$ and $\cb((1+\gre)K)$ such that the first process always finds at most as many new vertices of each type as the exploration, and the second process finds at least as many. Consequently, for a vertex $x$ of type $a$,
\begin{align}
\label{eq_brpr_bound}
\rho_{\geq \om(n)}((1 - 2\gre)K; a) + o(1) \leq \Pr(x \in B) \leq \rho_{\geq \om(n)}((1 + \gre)K; a) + o(1).
\end{align}

Since $\om(n) \rar \infty$, by Lemma 9.5 of \cite{bollobas2007phase}, we have $\rho_{\geq \om(n)}(K; a) \rar \rho(K; a)$ for every matrix or finitary kernel $K$, which parametrizes the offspring distribution of the branching process in the sense that the number of offsprings of type $b$ coming from a parent of type $a$ follows $Poi(K_{ab}\pi_b)$ distribution. So we can rewrite \eqref{eq_brpr_bound} as
\begin{align}
\label{eq_brpr_bound1}
\rho((1 - 2\gre)K; a) + o(1) \leq P(x \in B) \leq \rho((1 + \gre)K; a) + o(1).
\end{align}

\subsection*{\textbf{A2.} proof of Lemma \ref{lemma_brpr_cond}}

We need to consider certain branching process expectations $\gs(K)$ and $\gs_{\geq k}(K)$ in place of $\rho(K)$ and $\rho_{\geq k}(K)$. In preparation for the proof, we shall relate $\zeta(K)$ to the branching process $\cb_K$ via $\gs(K)$. As before, we assume that $K$ is a kernel on $(\s,\pi)$ with $K \in L^1$.

Let $A$ be a Poisson process on $\s$, with intensity given by a finite measure $\gl$, so that $A$ is a random multi-set on $\s$. If $g$ is a bounded measurable function on multi-sets on $\s$, it is easy to see that
\begin{align}
\label{eq_po_id}
\E(|A|g(A)) = \sum_{i \in \s} \E g(A\cup\{i\})\gl_i
\end{align}
For details see Proposition 10.4 of \cite{bollobas2007phase}.

Let $B(x)$ denote the first generation of the branching process $\cb_K (x)$. Thus $B(x)$ is given
by a Poisson process on $\s$ with intensity $K(x, y) \pi_x$. Suppose that $\sum_{b} K_{ab}\pi_b < \infty$ for every $a = 1, \ldots, Q$, so $B(x)$ is finite. Let $\gs(K;x)$ denote the expectation of $|B(x)|\mathbf{1}[|\cb_K(x)| = \infty]$, recalling that under the assumption $\sum_{b} K_{ab}\pi_b < \infty$ for every $a$, the branching process $\cb_K (x)$ dies out if and only if $|\cb_K (x)| < \infty$. Then
\begin{eqnarray*}
\sum_{b = 1}^Q K_{xb}\pi_b - \gs(K;x) & = & \E\left[|B(x)|\mathbf{1}(\cb_K(x) < \infty)\right] \\
 & = & \E\left(|B(x)|\prod_{z \in B(x)} \rho(K; z)\right) \\
 & = & \sum_{b = 1}^Q K_{xb}(1 - \rho(K;b))\E\left(\prod_{z \in B(x)} \rho(K; z)\right)\pi_b \\
 & = & \sum_{b = 1}^Q K_{xb}(1 - \rho(K;b))(1 - \rho(K;x)) \pi_b
\end{eqnarray*}
Here the penultimate step is from \eqref{eq_po_id}; the last step uses the fact that the branching process dies out if and only if none of the children of the initial particle survives. Writing $B$ for the first generation of $\cb_K$ conditioned on survival becomes
\begin{align*}
\gs(K) \equiv \E |B|\mathbf{1}[|\cb_K| = \infty] = \sum_{x = 1}^Q \gs(K; x) \pi_x
\end{align*}
Then, integrating over $x$ and subtracting from $\sum_{a,b} K_{ab}\pi_a\pi_b$, we get, 
\begin{align}
\label{eq_int_cond_kernel}
\gs (K) = \sum_{a,b} K_{ab}\left(1 - (1 - \rho(K;a))(1 - \rho(K;b))\right)\pi_a\pi_b
\end{align}
So, the kernel for the conditioned branching process becomes
\begin{align}
\label{eq_cond_kernel}
K_{ab}\left(\rho(K;a) + \rho(K;b) - \rho(K; a)\rho(K; b)\right)
\end{align}

\subsection*{\textbf{A3.} proof of Lemma \ref{lm_low_bnd}}

We have $\s$ is finite, say $\s = \{1,2,\ldots,Q\}$. Let $\G_d(v) \equiv \G_d(v, G_n)$ denote the $d$-distance set of $v$ in $G_n$, i.e., the set of vertices of $G_n$ at graph distance exactly $d$ from $v$, and let $\G_{\leq d}(v) \equiv \G_{\leq d}(v,G_n)$ denote the $d$-neighborhood $\cup_{d'\leq d}\G_{d'}(v)$ of $v$.

Let $0 < \gve < 1/10$ be arbitrary. The proof of \eqref{eq_brpr_bound1} involved first showing that, for $n$ large enough, the neighborhood exploration process starting at a given vertex $v$ of $G_n$ with type $a$ (chosen without inspecting $G_n$) could be coupled with the branching process $\cb_{(1+ \gve)K'} (i)$, where the $K'$ is defined by equation \eqref{eq_cond_kernel}, so that the branching process is conditioned to survive. However, henceforth we shall abuse notation and denote $K'$ as $K$. 

The neighborhood exploration process and multi-type branching process can be coupled so that for every $d$, $|\G_{d}(v)|$ is at most the number $N_d$ of particles in generation $d$ of $\cb_{(1+2\gve)K}(i)$. The number of vertices at generation $d$ of type $c$ of branching process $\cb_{(1+2\gve)K}(a)$, denoted by $N^a_{d,c}$ and the number of vertices of type $c$ at distance $d$ from $v$ for the neighborhood exploration process of $G_n$ is denoted by $|\G^a_{d, c}(v)|$f, where, $c=1, \ldots, Q$.

Elementary properties of the branching process imply that $\E N_d = O\left( ||T_{(1+2\gve)K}||^d\right) = O(((1 + 2\gve)\gl)^d)$, where $\gl = ||T_{K}|| > 1$.

Let $N^a_t(c)$ be the number of particles of type $c$ in the $t$-th generation of $\cb_K(a)$, then, $N^a_t$ is the vector $(N^a_t(1), \ldots, N^a_t (Q))$. Also, let $\nu = (\nu_1, \ldots, \nu_Q)$ be the eigenvector of $T_K$ with eigenvalue $\gl$ (unique, up to normalization, as $P$ is irreducible). From standard branching process results, we have
\begin{align}
\label{eq_brpr_lim1}
N^a_t/\gl^t \stackrel{a.s.}{\rar} X\nu,
\end{align}
where $X \geq 0$ is a real-valued random variable, $X$ is continuous except that it has some mass at 0, and $X = 0$ if and only if the branching process eventually dies out and lastly,
\begin{align*}
\E X = \nu_a.
\end{align*}
under the conditions given in Theorem V.6.1 and Theorem V.6.2 of \cite{athreya1972branching}.

Set $D = (1 - 10\gve)\log (n/\nu_a\nu_b)/\log \gl$. Then $D < (1 - \gve)\log (n/\nu_a\nu_b)/\log((1 + 2\gve)\gl)$ if $\gve$ is small enough, which we shall assume. Thus,
\begin{align*}
\E|\G_{\leq D}(v)| \leq \E\sum_{d = 0}^D N_d = O(((1 + 2\gve)\gl)^D) = O(n^{1 - \gve})
\end{align*}
So, summing over $v$, we have
\begin{align*}
\sum_{v\in V(G_n)}|\G_{\leq D}(v)| = \left|\left\{\{v, w\}: d_G(v, w) \leq (1 - \gve)\log(n/\nu_a\nu_b)/\log \gl\right\}\right| 
\end{align*}
and its expected value to be
\begin{align*}
\E\left|\left\{\{v, w\}: d_G(v, w) \leq (1 - \gve)\log (n/\nu_a\nu_b)/\log \gl\right\}\right| = \E\sum_{v\in V(G_n)}|\G_{\leq D}(v)| = O(n^{2-\gve})
\end{align*}
The above statement is equivalent to
\begin{align*}
\E\left|\left\{\{v, w\}: d_G(v, w) \leq (1 - \gve)\frac{\log n}{\log \gl/\log(\nu_a\nu_b)}\right\}\right| = \E\sum_{v\in V(G_n)}|\G_{\leq D}(v)| = O(n^{2-\gve})
\end{align*}
So, by Markov's Theorem, we have,
\begin{align*}
\Pr\left[\left|\left\{\{v, w\}: d_G(v, w) \leq (1 - \gve)\frac{\log n}{\log \gl/\log(\nu_a\nu_b)}\right\}\right| \leq O(n^{2-\gve/2}) \right] = o(1)
\end{align*}
for any fixed $\gve > 0$. 

\subsection*{\textbf{A4.} proof of Lemma \ref{lm_up_bnd1}}

We consider the branching process conditioned on survival. Now, we consider the single type branching process with offspring distribution $Poi(K_{aa}\pi_a)$ and the corresponding stochastic block model graph $G'_n$,  is the induced subgraph of the original graph $G_n$, where, vertices of $G'_n$ are only the vertices of type $a$ from $G_n$. So, $G'_n$ has in total $n_a$ vertices. So, we can always upper bound the the distance between two vertices of same type in $G_n$, by the distance between the two vertices in $G'_n$, since, the path representing distance between two vertices in $G'_n$ is present in $G_n$ but, the converse is not true. So, distance between two vertices in $G_n$ is always less than distance between two vertices in $G'_n$. So, we can say for any $v, w\in V(G_n) or V(G'_n)$ of same type, 
\begin{align*}
d_G(v, w) \leq d_{G'}(v,w)
\end{align*}
From here on, we shall abuse notation a bit and call $G'_n$ as $G_n$, since we are only considering the graph $G'_n$ from here on as the graph from stochastic block model.

We have $K_{aa} > 0$. Fix $0 < \eta < 1/10$. We shall assume that $\eta$ is small enough that $(1 - 2\eta)K_{aa}\pi_a > 1$. In the argument leading to \eqref{eq_brpr_bound1} in proof of Lemma \ref{lemma_brpr_sbm}, we showed that, given $\om(n)$ with $\om(n) = o(n)$ and a vertex $v$ of type $a$, the neighborhood exploration process of $v$ in $G_n$ could be coupled with the branching process $\cb_{(1-2\eta)K_{aa}}(a)$ so that whp the former dominates until it reaches size $\om(n)$. 

From here onwards we shall only consider a single-type branching process where particles have type $a$.  More precisely, writing $N_{d,a}$ for the number of vertices of type $a$ in generation $d$ of $\cb_{(1-2\eta)K_{aa}}(a)$, and $\G_{d,a}(v)$ for the set of type-$a$ vertices at graph distance $d$ from $v$, whp 
\begin{align}
\label{eq_brpr_count}
|\G_{d,a}(v)| \geq N_{d,a},\ \mbox{ for all } d \mbox{ s.t. } |\G_{\leq d}(v)| < \om(n).
\end{align}
This relation between the number of vertices at generation $d$ of branching process $\cb_{(1-2\eta)K_{aa}}(a)$, denoted by $N_{d,a}$ and the number of vertices at distance $d$ from $v$ for the neighborhood exploration process of $G_n$, denoted by $|\G_{d,a}(v)|$ becomes highly important later on in this proof. Note that the relation only holds when $|\G_{\leq d}(v)| < \om(n)$ for some $\om(n)$ such that $\om(n)/n \rar 0$ as $n\rar \infty$.

Now let us begin the second part of the proof. Let $N_t(a)$ be the number of particles of type $a$ in the $t$-th generation of $\cb_K$. Also, let $\gl_a = K_{aa}\pi_a$. From standard branching process results, we have
\begin{align}
\label{eq_brpr_lim}
N_t(a)/\gl_a^t \stackrel{a.s.}{\rar} X,
\end{align}
where $X \geq 0$ is a real-valued random variable, $X$ is continuous except that it has some mass at 0, and $X = 0$ if and only if the branching process eventually dies out.

Let $D$ be the integer part of $\log((n\pi_a)^{1/2+2\eta})/\log((1-2\eta)\gl_a)$.From \eqref{eq_brpr_lim}, conditioned on survival of branching process $\cb_{K_{aa}}(a)$, whp either $N_{D,a} = 0$ or $N_{D,a} \geq n^{1/2+\eta}$ (note that $N_{D,a}$ comes from branching process $\cb_{(1-2\eta)K_{aa}}(a)$ not branching process $\cb_{K_{aa}}(a)$). Furthermore, as $\lim_{d \rar \infty} \Pr(N_d \neq 0) = \rho((1 - 2\eta)K_{aa})$ and $D \rar \infty$, we have $\Pr(N_{D, a} \neq 0) \rar \rho((1 - 2\eta)K_{aa})$. Thus, if $n$ is large enough,
\begin{align*}
\Pr\left(N_{D,a} \geq (n\pi_a)^{1/2+\eta}\right) \geq \rho((1-2\eta)K_{aa}) - \eta.
\end{align*}
Now, we have conditioned that the branching process with kernel $K_{aa}$ is conditioned to survive. The right-hand side tends to $\rho(K_{aa}) = 1$ as $\eta \rar 0$. Hence, given any fixed $\gm > 0$, if we choose $\eta > 0$ small enough we have
\begin{align*}
\Pr\left(N_{D,a} \geq (n\pi_a)^{1/2+\eta}\right) \geq 1 - \gm 
\end{align*}
for $n$ large enough. 

Now, the neighborhood exploration process and branching process can be coupled so that for every $d$, $|\G_{d}(v)|$ is at most the number $M_d$ of particles in generation $d$ of $\cb_{(1+2\gve)K_{aa}}(a)$ from Lemma \ref{lemma_brpr_sbm} and Eq \eqref{eq_br_sbm}. So, we have, 
\begin{align*}
\E|\G_{\leq D}(v)| \leq \E\sum_{d = 0}^D M_d = O(((1 + 2\gve)\gl_a)^D) = o((n_a)^{2/3})
\end{align*}
if $\eta$ is small enough, since $D$ be the integer part of $\log((n\pi_a)^{1/2+2\eta})/\log((1-2\eta)\gl_a)$ and $|n_a/n - \pi_a| < \gve$. Note that the power $2/3$ here is arbitrary, we could have any power in the range $(1/2, 1)$. Hence,
\begin{align*}
|\G_{\leq D}(v)| \leq n_a^{2/3}\ \ whp,
\end{align*} 
and whp the coupling described in \eqref{eq_brpr_count} extends at least to the $D$-neighborhood. So, now, we are in a position to apply Eq \eqref{eq_brpr_count}, as we have $|\G_{\leq D}(v)| \leq n_a^{2/3} < \om(n)$, with $\om(n)/n \rar 0$.

Now let $v$ and $w$ be two fixed vertices of $G(n_a,P_a)$, of type $a$. We explore both their neighborhoods at the same time, stopping either when we reach distance $D$ in both neighborhoods, or we find an edge from one to the other, in which case $v$ and $w$ are within graph distance $2D + 1$. We consider two independent branching processes $\cb_{(1-2\eta)K_{aa}}(a)$, $\cb'_{(1-2\eta)K_{aa}}(a)$, with $N_{d, a}$ and $N'_{d, a}$ vertices of type $a$ in generation $d$ respectively. By previous equation, whp we encounter $o(n)$ vertices in the explorations so, by the argument leading to \eqref{eq_brpr_count}, whp either the explorations meet, or $|\G_{D, a}(v)| \geq N_{D, a}$ and $|\G_{D, a}(w)| \geq N'_{D, a}$ with the explorations not meeting. Using bound on $N_{d,a}$ and the independence of the branching processes, it follows that
\begin{align*}
\Pr\left(d(v,w) \leq 2D+1 \mbox{ or } |\G_{D,a}(v)|, |\G_{D,a}(w)| \geq n_a^{1/2+\eta}\right) \geq (1 - \gm)^2 - o(1). 
\end{align*}
Note that the two events in the above probability statement are not disjoint. We shall try to find the probability that the second event in the above equation holds but not the first. We have not examined any edges from $\G_D(v)$ to $\G_D(w)$, so these edges are present independently with their original unconditioned probabilities. The expected number of these edges is at least $|\G_{D,a}(v)||\G_{D,a}(w)|K_{aa}/n$. If $K_{aa} > 0$, this expectation is $\Omega((n^{1/2+\eta})^2/n) = \Omega(n^{2\eta})$. It follows that at least one edge is present with probability $1 - \exp(-\Omega(n^{2\eta})) = 1 - o(1)$. If such an edge is present, then $d(v, w) \leq 2D + 1$. So, the probability that the second event in the above equation holds but not the first is $o(1)$. Thus, the last equation implies that
\begin{align*}
\Pr(d(v,w) \leq 2D+1) \geq (1 - \gm)^2 - o(1) \geq 1 - 2\gm - o(1).
\end{align*}
Choosing $\eta$ small enough, we have $2D + 1 \leq (1 + \gve)\log n/\log \gl_a$. As $\gm$ is arbitrary, we
have
\begin{align*}
\Pr(d(v,w) \leq (1+\gve)\log n\pi_a/\log\gl_a) \geq 1 - \exp(-\Omega(n^{2\eta})).
\end{align*}
Now, $\gl_a = K_{aa}\pi_a$
and the lemma follows.

\subsection*{\textbf{A5.} proof of Lemma \ref{lm_up_bnd2}}

We consider the multi-type branching process with probability kernel $P_{ab}=\frac{K_{ab}}{n}$ $\forall a,b = 1, \ldots, Q$ and the corresponding random graph $G_n$ generated from stochastic block model has in total $n$ nodes. We condition that branching process $\cb_{K}$ survives.

Note that an upper bound $1$ is obvious, since we are bounding a probability, so it suffices to prove a corresponding lower bound. We may and shall assume that $K_{ab} > 0$ for some  $a,b$. 

Fix $0 < \eta < 1/10$. We shall assume that $\eta$ is small enough that $(1 - 2\eta)\gl > 1$. In the argument leading to \eqref{eq_brpr_bound1} in proof of Lemma \ref{lemma_brpr_sbm}, we showed that, given $\om(n)$ with $\om(n) = o(n)$ and a vertex $v$ of type $a$, the neighborhood exploration process of $v$ in $G_n$ could be coupled with the branching process $\cb_{(1-2\eta)K}(a)$ so that whp the former dominates until it reaches size $\om(n)$. More precisely, writing $N_{d,c}$ for the number of particles of type $c$ in generation $d$ of $\cb_{(1-2\eta)K}(a)$, and $\G_{d,c}(v)$ for the set of type $c$ vertices at graph distance $d$ from $v$, whp 
\begin{align}
\label{eq_brpr_count1}
|\G_{d,c}(v)| \geq N_{d,c},\ c = 1,\ldots,Q,\ \mbox{ for all } d \mbox{ s.t. } |\G_{\leq d}(v)| < \om(n).
\end{align}
This relation between the number of vertices at generation $d$ of type $c$ of branching process $\cb_{(1-2\eta)K}(a)$, denoted by $N_{d,c}$ and the number of vertices of type $c$ at distance $d$ from $v$ for the neighborhood exploration process of $G_n$, denoted by $|\G_{d, c}(v)|$ becomes highly important later on in this proof, where, $c=1, \ldots, Q$. Note that the relation only holds when $|\G_{\leq d}(v)| < \om(n)$ for some $\om(n)$ such that $\om(n)/n \rar 0$ as $n\rar \infty$.


Let $N^a_t(c)$ be the number of particles of type $c$ in the $t$-th generation of $\cb_K(a)$, then, $N^a_t$ is the vector $(N^a_t(1), \ldots, N^a_t (Q))$. Also, let $\nu = (\nu_1, \ldots, \nu_Q)$ be the eigenvector of $T_K$ with eigenvalue $\gl$ (unique, up to normalization, as $P$ is irreducible). From standard branching process results, we have
\begin{align}
\label{eq_brpr_lim1}
N^a_t/\gl^t \stackrel{a.s.}{\rar} X\nu,
\end{align}
where $X \geq 0$ is a real-valued random variable, $X$ is continuous except that it has some mass at 0, and $X = 0$ if and only if the branching process eventually dies out and lastly,
\begin{align*}
\E X = \nu_a
\end{align*}
under the conditions given in Theorem V.6.1 and Theorem V.6.2 of \cite{athreya1972branching}.

Let $D$ be the integer part of $\log((n)^{1/2+2\eta})/\log((1-2\eta)\gl)$. From \eqref{eq_brpr_lim1}, conditioned on survival of branching process $\cb_{K}(a)$, whp either $N^a_{D} =0$, or $N^a_{D,c} \geq n^{1/2+\eta}$ for each $c$ (note that $N^a_{D,c}$ comes from branching process $\cb_{(1-2\eta)K}(a)$ not branching process $\cb_{K}(a)$). Furthermore, as $\lim_{d \rar \infty} \Pr(N^a_d \neq 0) = \rho((1 - 2\eta)K)$ and $D \rar \infty$, we have $\Pr(N^a_D \neq 0) \rar \rho((1 - 2\eta)K)$. Thus, if $n$ is large enough,
\begin{align*}
\Pr\left(\forall c: N^a_{D,c} \geq n^{1/2+\eta}\right) \geq \rho((1-2\eta)K) - \eta.
\end{align*}
Now, we have conditioned that the branching process with kernel $K$ is conditioned to survive. The right-hand side tends to $\rho(K) = 1$ as $\eta \rar 0$. Hence, given any fixed $\gm > 0$, if we choose $\eta > 0$ small enough we have
\begin{align*}
\Pr\left(\forall c: N^a_{D,c} \geq n^{1/2+\eta}\right) \geq 1 - \gm 
\end{align*}
for $n$ large enough. 

Now, the neighborhood exploration process and branching process can be coupled so that for every $d$, $|\G_{d}(v)|$ is at most the number $M_d$ of particles in generation $d$ of $\cb_{(1+2\gve)K}(a)$ from Lemma \ref{lemma_brpr_sbm} and Eq \eqref{eq_br_sbm}. So, we have, 
\begin{align*}
\E|\G_{\leq D}(v)| \leq \E\sum_{d = 0}^D M_d = O(((1 + 2\gve)\gl)^D) = o(n^{2/3})
\end{align*}
if $\eta$ is small enough, since $D$ be the integer part of $\log(n^{1/2+2\eta})/\log((1-2\eta)\gl)$. Note that the power $2/3$ here is arbitrary, we could have any power in the range $(1/2, 1)$. Hence,
\begin{align*}
|\G_{\leq D}(v)| \leq n^{2/3}\ \ whp,
\end{align*} 
and whp the coupling described in \eqref{eq_brpr_count1} extends at least to the $D$-neighborhood. So, now, we are in a position to apply Eq \eqref{eq_brpr_count1}, as we have $|\G_{\leq D}(v)| \leq n_a^{2/3} < \om(n)$, with $\om(n)/n \rar 0$.

Now let $v$ and $w$ be two fixed vertices of $G(n,P)$, of types $a$ and $b$ respectively. We explore both their neighborhoods at the same time, stopping either when we reach distance $D$ in both neighborhoods, or we find an edge from one to the other, in which case $v$ and $w$ are within graph distance $2D + 1$. We consider two independent branching processes $\cb_{(1-2\eta)K}(a)$, $\cb'_{(1-2\eta)K}(b)$, with $N^a_{d, c}$ and $N^b_{d, c}$ vertices of type $c$ in generation $d$ respectively. By previous equation, whp we encounter $o(n)$ vertices in the explorations so, by the argument leading to \eqref{eq_brpr_count1}, whp either the explorations meet, or $|\G^a_{D, c}(v)| \geq N^a_{D, c}$ and $|\G^b_{D, c}(w)| \geq N^b_{D, c}$, $c = 1, \ldots, Q$, with the explorations not meeting. Using bound on $N^a_{d,c}$ and the independence of the branching processes, it follows that
\begin{align*}
\Pr\left(d(v,w) \leq 2D+1 \mbox{ or } \forall c :|\G^a_{D,c}(v)|, |\G^b_{D,c}(w)| \geq n^{1/2+\eta}\right) \geq (\rho(K) - \gm)^2 - o(1). 
\end{align*}
Note that the two events in the above probability statement are not disjoint. We shall try to find the probability that the second event in the above equation holds but not the first. We have not examined any edges from $\G_D(v)$ to $\G_D(w)$, so these edges are present independently with their original unconditioned probabilities. For any $c_1$, $c_2$, the expected number of these edges is at least $|\G^a_{D,c_1}(v)||\G^b_{D,c_2}(w)|K_{c_1c_2}/n$. Choosing $c_1, c_2$ such that $K_{c_1c_2} > 0$, this expectation is $\Omega((n^{1/2+\eta})^2/n) = \Omega(n^{2\eta})$. It follows that at least one edge is present with probability $1 - \exp(-\Omega(n^{2\eta})) = 1 - o(1)$. If such an edge is present, then $d(v, w) \leq 2D + 1$. So, the probability that the second event in the above equation holds but not the first is $o(1)$. Thus, the last equation implies that
\begin{align*}
\Pr(d(v,w) \leq 2D+1) \geq (1 - \gm)^2 - o(1) \geq 1 - 2\gm - o(1).
\end{align*}
Choosing $\eta$ small enough, we have $2D + 1 \leq (1 + \gve)\log (n)/\log \gl$. As $\gm$ is arbitrary, we
have
\begin{align*}
\Pr(d(v,w) \leq (1+\gve)\log (n)/\log\gl) \geq 1 - \exp(-\Omega(n^{2\eta})).
\end{align*}
The above statement is equivalent to
\begin{align*}
\Pr\left(d(v,w) \leq (1+\gve)\frac{\log n}{\log\gl}\right) \geq 1 - \exp(-\Omega(n^{2\eta})).
\end{align*}
and the lemma follows.

\end{document}